\DeclarePairedDelimiter{\prn}{(}{)}
\DeclarePairedDelimiter{\set}{\{}{\}}
\DeclarePairedDelimiterX{\Set}[2]{\{}{\}}{\,{#1}\;\delimsize|\;{#2}\,}
\DeclarePairedDelimiter{\norm}{\|}{\|}
\DeclarePairedDelimiter{\inpr}{\langle}{\rangle}
\DeclarePairedDelimiterX{\Brc}[2]{[}{]}{\,{#1}\;\delimsize|\;{#2}\,}
\DeclarePairedDelimiter{\round}{\lfloor}{\rceil}
\algnewcommand{\algorithmicinput}{\textbf{Input:}}
\algnewcommand{\Input}{\item[\algorithmicinput]}
\algnewcommand{\algorithmicoutput}{\textbf{Input:}}
\algnewcommand{\Output}{\item[\algorithmicoutput]}
\algnewcommand{\Break}{\textbf{break}}
\crefname{step}{Step}{Steps}
\Crefname{step}{Step}{Steps}
\newtheorem{theorem}{Theorem}[section]
\newtheorem{proposition}[theorem]{Proposition}
\newtheorem{assumption}[theorem]{Assumption}
\theoremstyle{definition}
\newcommand{\Mn}{\texorpdfstring{$\text{M}^\natural$\xspace}{M-natural}}
\newcommand{\Ln}{\texorpdfstring{$\text{L}^\natural$\xspace}{L-natural}}
\newcommand{\Z}{\mathbb{Z}}
\newcommand{\R}{\mathbb{R}}
\newcommand{\zeros}{\mathbf{0}}
\newcommand{\convol}{\mathbin{\square}}
\DeclareMathOperator{\argmin}{arg\,min}
\DeclareMathOperator{\argmax}{arg\,max}
\DeclareMathOperator{\dom}{dom}
\newcommand{\boxx}{\textsf{Box}\xspace}
\newcommand{\laminar}{\textsf{Laminar}\xspace}
\newcommand{\nested}{\textsf{Nested}\xspace}
\newcommand{\Ord}{\mathrm{O}}
\newcommand{\ord}{\mathrm{o}}
\newcommand{\Tinit}{T_\mathrm{init}}
\newcommand{\Tloc}{T_\mathrm{loc}}
\newcommand{\xcirc}{x^\circ}
\newcommand{\N}{N}
\newcommand{\B}{\mathbf{B}}
\renewcommand{\P}{\mathbf{P}}
\newcommand{\EO}{\mathsf{EO}}
\newcommand{\SFM}{\mathsf{SFM}}
\title{Faster Discrete Convex Function Minimization with Predictions: The M-Convex Case}
\author{%
Taihei Oki\\
The University of Tokyo\\
Tokyo, Japan\\
\href{mailto:oki@mist.i.u-tokyo.ac.jp}{oki@mist.i.u-tokyo.ac.jp}
\and
Shinsaku Sakaue\\
The University of Tokyo\\
Tokyo, Japan\\
\href{mailto:sakaue@mist.i.u-tokyo.ac.jp}{sakaue@mist.i.u-tokyo.ac.jp}
}
\date{}
\begin{document}

\maketitle

\begin{abstract}  
  Recent years have seen a growing interest in accelerating optimization algorithms with machine-learned predictions. Sakaue and Oki (NeurIPS 2022) have developed a general framework that warm-starts the \textit{L-convex function minimization} method with predictions, revealing the idea's usefulness for various discrete optimization problems. In this paper, we present a framework for using predictions to accelerate \textit{M-convex function minimization}, thus complementing previous research and extending the range of discrete optimization algorithms that can benefit from predictions. Our framework is particularly effective for an important subclass called \textit{laminar convex minimization}, which appears in many operations research applications. Our methods can improve time complexity bounds upon the best worst-case results by using predictions and even have potential to go beyond a lower-bound result. 
\end{abstract}

\section{Introduction}
Recent research on \textit{algorithms with predictions}~\citep{Mitzenmacher2021-bq} has demonstrated that we can improve algorithms' performance beyond the limitations of the worst-case analysis using predictions learned from past data.
In particular, a surge of interest has been given to research on using predictions to improve the time complexity of algorithms, which we refer to as \textit{warm-starts with predictions} for convenience. 
Since \citet{Dinitz2021-sd}'s seminal work on speeding up the Hungarian method for weighted bipartite matching with predictions, researchers have extended this idea to algorithms for various problems~\citep{Chen2022-oz,Polak2022-je,Davies2023-zl}. 
\citet{Sakaue2022-oc} have found similarities between the idea and standard warm-starts in continuous convex optimization and extended it to \textit{L-convex function minimization}, a broad class of discrete optimization problems studied in \textit{discrete convex analysis}~\citep{Murota2003-fo}. 
They thus have shown that warm-starts with predictions can improve the time complexity of algorithms for various discrete optimization problems, including weighted bipartite matching and weighted matroid intersection. 

In this paper, we extend the idea of warm-starts with predictions to a new direction called \textit{M-convex function minimization}, another important problem class studied in discrete convex analysis. 
The M-convexity is in conjugate relation to the L-convexity. 
Therefore, exploring the applicability of warm-starts with predictions to M-convex function minimization is crucial to broaden further the range of algorithms that can benefit from predictions, as is also mentioned in~\citep{Sakaue2022-oc}. 
Specifically, we focus on an important subclass of M-convex function minimization called \textit{laminar convex minimization} (\laminar), which forms a large problem class and is widely studied in the field of operations research. 
To make it easy to imagine, we describe the most basic form (\boxx) of \laminar, 
\begin{align}\label{eq:problem-box}
  (\boxx) 
  && 
  \mathop{\text{minimize}}_{x \in \Z^n} 
  \;\; 
  \sum_{i = 1}^n f_i(x_i) 
  && 
  \mathop{\text{subject to}} 
  \;\; 
  \sum_{i=1}^n x_i = R,\   
  \ell_i \le x_i \le u_i \ (i=1,\dots,n),
\end{align}
where $f_1,\dots,f_n:\R\to\R$ are univariate convex functions, $R\in\Z$, $\ell_1,\dots,\ell_n\in \Z\cup\set{-\infty}$, and $u_1,\dots, u_n\in \Z\cup\set{+\infty}$. 
Note that the variable $x \in \Z^n$ is an integer vector, which is needed when, for example, considering allocating $R$ indivisible resources to $n$ entities. 
As detailed later, adding some constraints and objectives to \boxx yields more general classes, \nested and \laminar, where the level of generality increases in this order. 
Streamlining repetitive solving of such problems by using predictions can provide substantial benefits of saving computation costs, as we often encounter those problems in, e.g., resource allocation~\citep{Katoh2013-xt}, equilibrium analysis~\citep{Fujishige2015-jh}, and portfolio management~\citep{Chen2021-vf}.

\begin{table}[tb]
	\caption{Our results and the best worst-case bounds for \laminar, \nested, and \boxx. 
  $n$ is the number of variables, $R$ specifies the equality constraint as in \eqref{eq:problem-box}, and $m = |\Set*{Y\in\mathcal{F}}{|Y|\ge2}| = \Ord(n)$ is the number of additional constraints needed to convert \boxx into \nested and \laminar (see \cref{section:laminar-convex-minimization}). 
    }\label{table:results}
	\vskip 0.15in
	\centering
  \begin{small}
	\begin{tabular}{ccc} \toprule
	\textsc{Problem}  & \textsc{Our results}  &  \textsc{Worst-case time complexity}\\ \midrule
  \laminar & $\Ord(n \norm{x^* - \hat{x}}_1)$ & $\Ord\left(n^2\log n \log \frac{mR}{n}\right)$~\citep{Hochbaum1990-kl,Orlin2013-of}\footnotemark\\
  \nested & $\Ord(n \norm{x^* - \hat{x}}_1)$ & $\Ord(n \log m \log R)$~\citep{Vidal2019-bm}\\
  \boxx & $\Ord(n + \log n \cdot \norm{x^* - \hat{x}}_1 )$ & $\Ord\left(n \log \frac{R}{n}\right)$~\citep{Frederickson1982-ps,Hochbaum1994-id} \\
	\bottomrule
	\end{tabular}
  \end{small}
  \vskip -0.1in
\end{table}

\footnotetext{While the worst-case analysis in~\citep{Hochbaum1990-kl,Orlin2013-of} focuses on separable objective functions, we can extend it to more general \laminar in \eqref{eq:problem-laminar-convex} by introducing additional variables at the slight cost of setting $m = \Theta(n)$.}

\subsection{Our contribution}
We give a framework for accelerating M-convex minimization with predictions building on previous research~\citep{Dinitz2021-sd,Sakaue2022-oc} (\cref{section:m-convex-framework}). 
We show that, given a vector $\hat{x} \in\R^n$ that predicts an optimal solution $x^* \in\Z^n$, the greedy algorithm for M-convex function minimization finds an optimal solution in $\Ord(\Tinit + \Tloc\norm{x^* - \hat{x}}_1)$ time, where $\Tinit$ and $\Tloc$ represent the time for converting $\hat{x}$ into an initial feasible solution and for finding a locally steepest descent direction, respectively. 
Since we can minimize $\norm{x^* - \hat{x}}_1$ provably and approximately given optimal solutions to past instances~\citep{Dinitz2021-sd,Khodak2022-mb}, this framework can provide better time complexity bounds benefiting from predictions. 
We also discuss how to apply our framework to general M-convex function minimization in \cref{sec:general-m-convex}. 

\Cref{section:laminar-convex-minimization} presents our main technical results. 
We apply our framework to \laminar, \nested, and \boxx and obtain time complexity bounds shown in \cref{table:results}. 
Our time complexity bounds can improve the existing worst-case bounds given accurate predictions. 
In particular, our $\Ord(n\norm{x^* - \hat{x}}_1)$-time bound for \laminar improves the existing $\Ord(n^2\log n \log \frac{mR}{n})$-time bound even if prediction error $\norm{x^* - \hat{x}}_1$ is as large as $\Ord(n)$. 
Our result for \nested is a corollary of that for \laminar and improves the existing worst-case bound if $\norm{x^* - \hat{x}}_1 = \Ord(1)$. 
In the case of \boxx, we can further reduce the time complexity to $\Ord(n + \log n \cdot \norm{x^* - \hat{x}}_1 )$ by modifying the algorithm for \laminar. 
Notably, our algorithm for \boxx runs in $\Ord(n)$ time if $\norm{x^* - \hat{x}}_1 = \Ord(n/\log n)$, even though there exists an $\Omega(n \log\log(R/n^2))$-time lower bound for \boxx~\citep{Hochbaum1994-id}. 
As far as we know, this is the first result that can go \textit{beyond the lower bound} on the time complexity in the context of warm-starts with predictions. 
Experiments in \cref{section:experiments} confirm that using predictions can improve empirical computation costs. 

To the best of our knowledge, no prior work in the literature has made explicit improvements upon the theoretically fastest algorithms, even if predictions are accurate enough. 
By contrast, our methods can improve the best worst-case bounds and potentially go beyond the lower-bound result. 
Thus, our work not only extends the class of problems that we can efficiently solve using predictions but also represents a crucial step toward revealing the full potential of warm-starts with predictions. 
In this paper, we do not discuss the worst-case time complexity of our algorithm since we can readily upper bound it by executing standard algorithms with worst-case guarantees in parallel, as discussed in~\citep{Sakaue2022-oc}.

\subsection{Related work}
Algorithms with predictions~\citep{Mitzenmacher2021-bq}, improving algorithms' performance by using predictions learned from past data, is a promising subfield in \textit{beyond the worst-case analysis of algorithms}~\citep{Roughgarden2021-cf}. 
While this idea initially gained attention to improve competitive ratios of online algorithms~\citep{Purohit2018-yx,Bamas2020-rj,Lykouris2021-xn,Azar2022-rh}, recent years have seen a surge of interest in improving algorithms' running time~\citep{Dinitz2021-sd,Chen2022-oz,Sakaue2022-oc,Polak2022-je,Davies2023-zl}. 
A comprehensive list of papers on algorithms with predictions is available at~\citep{Lindermayr_undated-mr}.
The most relevant to our work is~\citep{Sakaue2022-oc}, in which predictions are used to accelerate L-/\Ln-convex function minimization, a large problem class including weighted bipartite matching and weighted matroid intersection.  
On the other hand, warm-starts with predictions remain to be studied for M-convex function minimization,\footnote{Similar to the L-/\Ln-convex case, we can deal with \emph{\Mn-convex functions}, which we omit for simplicity.} another essential class that is in conjugate relation to L-convex function minimization in discrete convex analysis~\citep{Murota2003-fo}. 
Although our basic idea for utilizing predictions is analogous to the previous approach~\citep{Dinitz2021-sd,Sakaue2022-oc}, our algorithmic techniques to obtain the time complexity bounds in \cref{table:results} for the specific M-convex function minimization problems are entirely different (see \cref{section:laminar-convex-minimization}). 

M-convex function minimization includes many important nonlinear integer programming problems, including \laminar, \nested, and \boxx, which have been extensively studied in the context of resource allocation~\citep{Katoh2013-xt}. 
A survey of recent results is given in~\citep{Schoot_Uiterkamp2022-tz}. 
\Cref{table:results} summarizes the worst-case time complexity bounds relevant to ours.  
Besides, faster algorithms for those problems under additional assumptions have been studied. 
For example, \citet{Schoot_Uiterkamp2022-tz} showed that, if an objective function is a sum of $f(x_i + b_i)$ ($i=1,\dots,n$) for some identical convex function $f$ and $b_i\in\Z$, we can solve \boxx, \nested, and \laminar with existing algorithms that run in $\Ord(n)$~\citep{Brucker1984-gk,Ibaraki1988-sg}, $\Ord(n\log m)$~\citep{Vidal2019-bm}, and $\Ord(n^2)$~\citep{Moriguchi2011-an} time, respectively.
\citet{Hochbaum1994-id} gave an $\Ord(n\log n \log\frac{R}{n})$-time algorithm for \laminar with separable objective functions and no lower bound constraints.\footnote{An $\Ord(n \log n)$-time algorithm for  \laminar (and \nested) with quadratic objective functions was also proposed in~\citep{Hochbaum1995-lh}, but later it turned out incorrect, as pointed out in~\citep{Moriguchi2011-an}.}
Even in those special cases, our results in \cref{table:results} are comparable or better given that prediction errors $\norm{x^* - \hat{x}}_1$ are small enough. 
There also exist empirically fast algorithms~\citep{Schoot_Uiterkamp2021-jd,Wu2021-hx}, whose time complexity bounds are generally worse than the best results. 
Other problem classes with network and submodular constraints have also been studied~\citep{Hochbaum1995-lh,Moriguchi2011-an}. 
Extending our framework to those classes is left for future work.

Resource allocation with continuous variables has also been well-studied. 
One may think we can immediately obtain faster algorithms for discrete problems by accelerating continuous optimization algorithms for relaxed problems with predictions and using obtained solutions as warm-starts. 
However, this is not true since there generally exists an $\Ord(n)$ gap in the $\ell_1$-norm between real and integer optimal solutions~\citep[Example 2.9]{Moriguchi2011-an}, implying that we cannot always obtain faster algorithms for solving a discrete problem even if an optimal solution to its continuous relaxation is available for free. 
%

\section{Preliminaries} 
Let $\N \coloneqq \set{1, \dotsc, n}$ be a finite ground set of size $n$. 
For $i\in\N$, let $e_i$ be the $i$th standard vector, i.e., all zero but the $i$th entry set to one. 
For any $x\in\R^\N$ and $X \subseteq \N$, let $x(X) = \sum_{i\in X}x_i$. 
Let $\round{\cdot}$ denote (element-wise) rounding to a closet integer.
%
%
For a function $f \vcentcolon \Z^\N \to \R \cup \set{+\infty}$ on the integer lattice $\Z^\N$, its \emph{effective domain} is defined as $\dom f \coloneqq \Set{x \in \Z^\N}{f(x) < +\infty}$.
A function $f$ is called \emph{proper} if $\dom f \ne \emptyset$.
For $Q \subseteq \R^\N$, its \emph{indicator function} $\delta_Q \vcentcolon \R^\N \to \set{0, +\infty}$ is defined by $\delta_Q(x) \coloneqq 0$ if $x \in Q$ and $+\infty$ otherwise.


\subsection{M-convexity and greedy algorithm for M-convex function minimization}
We briefly explain M-convex functions and sets; see \citep[Sections 4 and 6]{Murota2003-fo} for more information.
A proper function $f \vcentcolon \Z^\N \to \R \cup \set{+\infty}$ is said to be \emph{M-convex} if for every $x, y \in \dom f$ and $i \in \Set*{i^\prime \in \N}{x_{i^\prime} > y_{i^\prime}}$, there exists $j \in \Set*{j^\prime \in \N}{x_{j^\prime} < y_{j^\prime}}$ such that 
\[
  f(x) + f(y) \ge f(x - e_i + e_j) + f(y + e_i - e_j).
\]
A non-empty set $Q \subseteq \Z^\N$ is said to be \emph{M-convex} if its indicator function $\delta_Q:\Z^\N \to \set{0, +\infty}$ is M-convex.
Conversely, if $f$ is an M-convex function, $\dom f$ is an M-convex set. 
Note that an M-convex set always lies in a hyperplane defined by $\Set{x \in \R^\N}{x(N) = R}$ for some $R \in \Z$. 
It is worth mentioning that the M-convexity is built upon the well-known \emph{basis exchange property} of matroids, and the base polytope of a matroid is the convex hull of an M-convex set.

The main subject of this paper is M-convex function minimization, $\min_{x \in \Z^\N} f(x)$, where $f:\Z^\N \to \R\cup\set{+\infty}$ is an M-convex function. 
Note that $\dom f \subseteq \Z^\N$ represents the feasible region of the problem. 
For convenience of analysis, we assume the following basic condition.
\begin{assumption}\label{assumption:tie-break}
  An M-convex function $f:\Z^\N\to\R\cup\set{+\infty}$ always has a unique minimizer $x^*$. 
\end{assumption}
This uniqueness assumption is common in previous research~\citep{Sakaue2022-oc,Davies2023-zl} 
(and is also needed in~\citep{Dinitz2021-sd,Chen2022-oz,Polak2022-je}, although not stated explicitly).
In the case of \laminar, it is satisfied for generic, strictly convex objective functions. 
Even if not, there are natural tie-breaking rules, e.g., choosing the minimizer that attains the lexicographic minimum among all minimizers closest to the origin; 
we can implement this by adding $\epsilon\norm{x}_2^2 + \sum_{i=1}^n \epsilon^{i+1}|x_i|$ for sufficiently small $\epsilon\in(0,1)$ to $f$, preserving its M-convexity.  
This is in contrast to the L-convex case, where arbitrarily many minimizers always exist (see~\citep{Sakaue2023-ym}).

\begin{wrapfigure}{r}{0.55\textwidth}
  \vspace{-16pt}
  \begin{minipage}{0.55\textwidth}
    \begin{algorithm}[H]
    	\caption{Greedy algorithm}\label{alg:greedy}
    	\begin{algorithmic}[1]
    		\State $x \gets \xcirc$ 
        \While{not converged}
          \State Find $i,j \in \N$ that minimize $f(x - e_i + e_j)$ {\label[step]{step:find-direction}}
          \If{$f(x) \le f(x - e_i + e_j)$}
            \State \textbf{return} $x$
          \EndIf
          \State $x \gets x - e_i + e_j$
        \EndWhile
    	\end{algorithmic}
    \end{algorithm}
  \end{minipage}
\end{wrapfigure}
We can solve M-convex function minimization by a simple greedy algorithm shown in \cref{alg:greedy}, which iteratively finds a locally steepest direction, $-e_i+e_j$, and proceeds along it.
If this update does not improve the objective value, the current solution is ensured to be the minimizer $x^* = \argmin f$ due to the M-convexity~\citep[Theorem~6.26]{Murota2003-fo}. 
The number of iterations depends on the $\ell_1$-distance to $x^*$ as follows. 
\begin{proposition}[{\citep[Corollary~4.2]{Shioura2022-jy}}]\label{prop:m-convex-iteration}
  \Cref{alg:greedy} terminates exactly in $\norm{x^* - \xcirc}_1/2$ iterations. 
\end{proposition}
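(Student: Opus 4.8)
The plan is to track the $\ell_1$ distance to the optimum, $\Phi(x) \coloneqq \norm{x - x^*}_1$, and show that every iteration decreases it by exactly $2$. As preparation I would record a few elementary facts about a single update $x \mapsto x - e_i + e_j$. The update is executed only when $f(x - e_i + e_j) < f(x)$, which forces $i \ne j$; hence the move has $\ell_1$-length $\norm{-e_i + e_j}_1 = 2$ and preserves the coordinate sum $x(\N)$, so all iterates stay on the hyperplane $\Set*{x \in \R^\N}{x(\N) = R}$ containing $\dom f$. I would also invoke correctness of the greedy method (\citep[Theorem~6.26]{Murota2003-fo}): the algorithm terminates, and by \cref{assumption:tie-break} it can only stop at the unique minimizer $x^*$. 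Writing $\xcirc = x^0 \to x^1 \to \dots \to x^k = x^*$ for the iterates, the number of iterations is exactly $k$.

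At each step $x^{t-1} \mapsto x^t$ the two points differ by $-e_i + e_j$ with both coordinates moving by one, so $\Phi$ changes by $-2$, $0$, or $+2$, decreasing by $2$ precisely when $x^{t-1}_i > x^*_i$ and $x^{t-1}_j < x^*_j$. Summing these increments over the whole run gives $\Phi(x^*) - \Phi(\xcirc) = -\norm{\xcirc - x^*}_1$, and since each increment is at least $-2$, I immediately obtain the lower bound $k \ge \norm{x^* - \xcirc}_1/2$. Equality --- which is exactly what the proposition asserts --- holds if and only if every step decreases $\Phi$ by exactly $2$.

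The heart of the argument, and the step I expect to be the main obstacle, is therefore to rule out steps with $\Phi$-increment $0$ or $+2$; equivalently, to show the greedily chosen pair $(i,j)$ always satisfies $x_i > x^*_i$ and $x_j < x^*_j$. Here I would use M-convexity through the exchange property. Applying it to $x$ and $x^*$ at any index $i_0$ with $x_{i_0} > x^*_{i_0}$ produces a partner $j_0$ with $x_{j_0} < x^*_{j_0}$ and $f(x - e_{i_0} + e_{j_0}) + f(x^* + e_{i_0} - e_{j_0}) \le f(x) + f(x^*)$; since $x^*$ minimizes $f$, this already exhibits an improving direction pointing toward $x^*$, so the descent chosen in \cref{step:find-direction} is at least as good. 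The delicate part is to show the chosen direction itself is correctly oriented in both coordinates. I would argue by contradiction: assuming $x_i \le x^*_i$, the updated point $x' = x - e_i + e_j$ has $x'_i < x^*_i$, and feeding $(x^*, x')$ back into the exchange property at index $i$ yields some $j'$ with $f(x + e_j - e_{j'}) \le f(x')$, i.e.\ another exchange from $x$ whose value is no larger than that of the chosen pair; minimality then forces it to be optimal as well, while its decremented coordinate is genuinely above target. Iterating this and symmetrically handling the increment coordinate --- breaking ties via the perturbation in \cref{assumption:tie-break} --- pins down an optimal direction oriented toward $x^*$ in both coordinates, contradicting $x_i \le x^*_i$.

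Combining the two directions, every update strictly reduces $\Phi$ by exactly $2$, so the algorithm performs exactly $\norm{x^* - \xcirc}_1/2$ updates before the termination check returns $x^*$, which proves the claim.
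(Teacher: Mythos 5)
The paper does not actually prove this statement---it is imported wholesale from \citep[Corollary~4.2]{Shioura2022-jy}---so I am judging your argument on its own merits. Your overall strategy (the potential $\Phi(x)=\norm{x-x^*}_1$ drops by exactly $2$ per update, hence the iteration count) is the right one, and your reduction to the claim that every steepest pair $(i,j)$ satisfies $x_i>x^*_i$ and $x_j<x^*_j$ is exactly the crux. The gap is in how you close that claim. Assuming $x_i\le x^*_i$ and applying the exchange axiom to $(x^*,x')$ with $x'=x-e_i+e_j$, you correctly derive a $j'$ with $x^*_{j'}<x'_{j'}$ and $f(x-e_{j'}+e_j)\le f(x')$, hence (by minimality of $f(x')$) another optimal pair $(j',j)$ whose decrement coordinate is correctly oriented. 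But exhibiting \emph{some} correctly oriented optimal pair does not contradict the \emph{chosen} pair being wrongly oriented: \cref{step:find-direction} picks an arbitrary minimizer of $f(x-e_i+e_j)$, and nothing you have shown rules out a wrongly oriented pair being tied for optimal, in which case that iteration changes $\Phi$ by $0$ or $+2$ and the exact count fails. The appeal to ``breaking ties via the perturbation in \cref{assumption:tie-break}'' does not rescue this: that assumption only makes the minimizer of $f$ unique; it says nothing about how ties among descent directions are resolved.

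The missing idea is to extract more from the very chain of inequalities you already wrote down. With $x'=x-e_i+e_j$ and $x^*_i\ge x_i>x'_i$, the exchange axiom gives
\[
  f(x^*)+f(x') \;\ge\; f(x^*-e_i+e_{j'})+f(x-e_{j'}+e_j),
\]
and since $f(x^*-e_i+e_{j'})\ge f(x^*)$ (optimality of $x^*$) and $f(x-e_{j'}+e_j)\ge f(x')$ (steepest choice; note $j'\ne j$, for otherwise the chain would give $f(x')\ge f(x)$, contradicting strict descent), \emph{both} inequalities must be tight. Tightness of the first says that $x^*-e_i+e_{j'}$ is a second minimizer of $f$, distinct from $x^*$ because $j'\ne i$ (as $x^*_{j'}<x'_{j'}$ while $x^*_i>x'_i$)---and that contradicts \cref{assumption:tie-break} directly. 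The symmetric argument at the increment coordinate disposes of $x_j\ge x^*_j$. This is the standard ``minimizer cut'' argument; without the uniqueness assumption one instead takes $x^*$ to be a minimizer nearest to $x'$ in $\ell_1$ and contradicts its nearness. With this repair, your potential-function bookkeeping does yield exactly $\norm{x^*-\xcirc}_1/2$ iterations.
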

A similar iterative method is used in the L-convex case~\citep{Sakaue2022-oc}, whose number of iterations depends on the $\ell_\infty$-distance and a steepest direction is found by some combinatorial optimization algorithm (e.g., the Hopcroft--Karp algorithm in the bipartite-matching case).
On the other hand, in the M-convex case, computing a steepest direction is typically cheap (as we only need to find two elements $i, j \in \N$), while the number of iterations depends on the $\ell_1$-distance, which can occupy a larger fraction of the total time complexity than the $\ell_\infty$-distance.
Hence, reducing the number of iterations with predictions can be more effective in the M-convex case. 
\Cref{section:m-convex-framework} presents a framework for this purpose. 

Similar methods to \cref{alg:greedy} are also studied in submodular function maximization~\citep{Lee2010-ww}. 
Indeed, M-convex function minimization captures a non-trivial subclass of submodular function maximization that the greedy algorithm can solve (see \citep[Note 6.21]{Murota2003-fo}), while it is NP-hard in general~\citep{Nemhauser1978-vk,Feige1998-he}

\section{Warm-start-with-prediction framework M-convex function minimization}\label{section:m-convex-framework}
We present a framework for warm-starting the greedy algorithm for M-convex function minimization with predictions. 
Although it resembles those of previous studies~\citep{Dinitz2021-sd,Sakaue2022-oc}, it is worth stating explicitly how the time complexity depends on prediction errors for M-convex function minimization. 

We consider the following three phases as in previous studies: 
(i) obtaining an initial feasible solution $\xcirc \in \Z^\N$ from a prediction $\hat{x}\in\R^\N$, 
(ii) solving a new instance by warm-starting an algorithm with $\xcirc$, and 
(iii) learning predictions $\hat{x}$. 
The following theorem gives a time complexity bound for (i) and (ii), implying that we can quickly solve a new instance if a given prediction $\hat{x}$ is accurate. 

\begin{theorem}\label{thm:time-complexity}
  Let $f:\Z^V\to\R\cup\set{+\infty}$ be an M-convex function that has a unique minimizer $x^* = \argmin f$ and $\hat{x}\in\R^\N$ be a possibly infeasible prediction. 
  Suppose that \cref{alg:greedy} starts from an initial feasible solution satisfying $\xcirc \in \argmin\Set*{\norm{x - \round{\hat{x}}}_1}{x \in \dom f}$.
  Then, \Cref{alg:greedy} terminates in $\Ord(\norm{x^* - \hat{x}}_1)$ iterations. 
  Thus, if we can compute $\xcirc$ in $\Tinit$ time and find $i,j \in \N$ that minimize $f(x - e_i + e_j)$ in \cref{step:find-direction} in $\Tloc$ time, the total time complexity is $\Ord(\Tinit + \Tloc \norm{x^* - \hat{x}}_1)$. 
\end{theorem}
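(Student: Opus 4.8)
The plan is to reduce the whole statement to the exact iteration count supplied by \cref{prop:m-convex-iteration}, which equals $\norm{x^* - \xcirc}_1/2$. Thus it suffices to prove $\norm{x^* - \xcirc}_1 = \Ord(\norm{x^* - \hat{x}}_1)$: once the number of iterations is bounded by $\Ord(\norm{x^* - \hat{x}}_1)$, the time bound follows immediately by charging $\Tinit$ for computing $\xcirc$ once at the start and $\Tloc$ for the locally steepest direction found in \cref{step:find-direction} at each iteration, giving $\Ord(\Tinit + \Tloc\norm{x^* - \hat{x}}_1)$ in total.

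First, I would exploit that $\xcirc$ is an $\ell_1$-projection of $\round{\hat{x}}$ onto $\dom f$. Since $x^*$ is the minimizer of $f$ it lies in $\dom f$, so the optimality defining $\xcirc$ gives $\norm{\xcirc - \round{\hat{x}}}_1 \le \norm{x^* - \round{\hat{x}}}_1$. Feeding this into the triangle inequality $\norm{x^* - \xcirc}_1 \le \norm{x^* - \round{\hat{x}}}_1 + \norm{\round{\hat{x}} - \xcirc}_1$ yields $\norm{x^* - \xcirc}_1 \le 2\norm{x^* - \round{\hat{x}}}_1$.

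Next, I would pass from the rounded prediction back to $\hat{x}$ coordinatewise. For each $i \in \N$, the value $x^*_i$ is an integer while $\round{\hat{x}_i}$ is the integer nearest $\hat{x}_i$, so $\abs{\hat{x}_i - \round{\hat{x}_i}} \le \abs{\hat{x}_i - x^*_i}$; the triangle inequality then gives $\abs{x^*_i - \round{\hat{x}_i}} \le \abs{x^*_i - \hat{x}_i} + \abs{\hat{x}_i - \round{\hat{x}_i}} \le 2\abs{x^*_i - \hat{x}_i}$. Summing over $i$ shows $\norm{x^* - \round{\hat{x}}}_1 \le 2\norm{x^* - \hat{x}}_1$, and chaining with the previous bound gives $\norm{x^* - \xcirc}_1 \le 4\norm{x^* - \hat{x}}_1$. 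By \cref{prop:m-convex-iteration}, the iteration count is therefore at most $2\norm{x^* - \hat{x}}_1 = \Ord(\norm{x^* - \hat{x}}_1)$, which combined with the per-iteration and initialization costs proves the claim.

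The argument is short, so the only genuinely load-bearing step is the rounding estimate: one must confirm that nearest-integer rounding inflates the $\ell_1$-distance to the \emph{integral} point $x^*$ by a multiplicative factor of two, rather than contributing an uncontrolled additive $\Ord(n)$ term (which is exactly the naive bound $\abs{\hat{x}_i - \round{\hat{x}_i}} \le 1/2$ would give). I would also verify the implicit well-definedness behind \cref{prop:m-convex-iteration}, namely that $\xcirc$ and $x^*$ both lie on the hyperplane $\Set{x}{x(\N) = R}$ so that $\norm{x^* - \xcirc}_1$ is even and the stated iteration count is an integer; this holds because $\dom f$ is an M-convex set and hence sits in such a hyperplane.
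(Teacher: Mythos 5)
Your proposal is correct and follows essentially the same route as the paper's proof: reduce to \cref{prop:m-convex-iteration}, then bound $\norm{x^* - \xcirc}_1 \le 4\norm{x^* - \hat{x}}_1$ using exactly the two same ingredients — the $\ell_1$-projection optimality of $\xcirc$ against the feasible point $x^*$, and the fact that nearest-integer rounding cannot move $\hat{x}$ farther than the integral point $x^*$ is. The only difference is bookkeeping (you split the triangle inequality through $\round{\hat{x}}$ and handle the rounding estimate coordinatewise, whereas the paper splits through $\hat{x}$ and $\round{\hat{x}}$ in a single three-term chain), and you arrive at the same constant $4$.
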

\begin{proof}
  Due to \cref{prop:m-convex-iteration}, the number of iterations is bounded by $\norm{x^* - \xcirc}_1/2$. 
  Thus, it suffices to prove $\norm{x^* - \xcirc}_1 = \Ord(\norm{x^* - \hat{x}}_1)$. 
  By using the triangle inequality, we obtain
  $
  \norm{x^* - \xcirc}_1 
  \le 
  \norm{x^* - \hat{x}}_1 + \norm{\hat{x} - \round{\hat{x}}}_1 + \norm{\round{\hat{x}} - \xcirc}_1 
  $. 
  We below show that the right-hand side is $\Ord(\norm{x^* - \hat{x}}_1)$. 
  The second term is bounded as $\norm{\hat{x} - \round{\hat{x}}}_1 \le \norm{\hat{x} - x^*}_1$ since $x^* \in \Z^\N$. 
  As for the third term, we have $\norm{\round{\hat{x}} - \xcirc}_1 \le \norm{\round{\hat{x}} - x^*}_1$ since $\xcirc \in \dom f$ is a feasible point closet to $\round{\hat{x}}$ and $x^* \in \dom f$, and the right-hand side, $\norm{\round{\hat{x}} - x^*}_1$, is further bounded as $\norm{\round{\hat{x}} - x^*}_1 \le \norm{\round{\hat{x}} - \hat{x}}_1 + \norm{\hat{x} - x^*}_1 \le 2\norm{\hat{x} - x^*}_1$ due to the previous bound on the second term. 
  Thus, $\norm{x^* - \xcirc}_1 \le 4\norm{\hat{x} - x^*}_1$ holds. 
\end{proof}

Note that we round $\hat{x}$ to a closest integer point $\round{\hat{x}}$ before projecting it onto $\dom f$, while rounding comes after projection in the L-/\Ln-convex case~\citep{Sakaue2022-oc}. 
This subtle difference is critical since rounding after projection may result in an infeasible integer point that is far from the minimizer $x^*$ by $\Ord(n)$ in the $\ell_1$-norm. 
To avoid this, we swap the order of the operations and modify the analysis accordingly. 

Let us turn to phase (iii), learning predictions. 
This phase can be done in the same way as previous studies~\citep{Dinitz2021-sd,Khodak2022-mb}. 
In particular, as shown in~\citep{Khodak2022-mb}, we can learn predictions in an online fashion with the standard online subgradient descent method (see, e.g.,~\citep{Orabona2020-dx}), and a sample complexity bound follows from online-to-batch conversion~\citep{Cesa-Bianchi2004-id,Cutkosky2019-ik}. 
Formally, the following proposition guarantees that we can provably learn predictions to decrease the time complexity bound in \cref{thm:time-complexity}.

\begin{proposition}[\citep{Khodak2022-mb}]\label{theorem:learning}
	Let $f_t:\Z^\N \to \R\cup\set*{+\infty}$ for $t=1,\dots,T$ be a sequence of M-convex functions, each of which has a unique minimizer $x^*_t = \argmin f_t$, and $V \subseteq \R^\N$ be a closed convex set whose $\ell_2$-diameter is $D$. 
  Then, the online subgradient descent method on $V$ applied to loss functions $\norm{x^*_t - \cdot}_1$ for $t=1,\dots,T$ returns $\hat{x}_1,\dots,\hat{x}_T\in V$ that satisfy the following regret bound:
  \[
		\sum_{t = 1}^T \norm{x^*_t - \hat{x}_t}_1 \le 
    \min_{\hat{x}^* \in V} \sum_{t = 1}^T \norm{x^*_t - \hat{x}^*}_1 + \Ord(D\sqrt{nT}).
	\]
  Furthermore, for any distribution $\mathcal{D}$ over M-convex functions $f:\Z^\N \to \R\cup\set*{+\infty}$, each of which has a unique minimizer $x^*_f = \argmin f$, $\delta \in (0,1]$, and $\varepsilon >0$, given $T = \Omega\prn[\Big]{\prn[\big]{\frac{D}{\varepsilon}}^2 n \log \frac{1}{\delta}}$ i.i.d.\ draws, $f_1,\dots,f_T$, from $\mathcal{D}$, we can compute $\hat{x} \in V$ that satisfies
  \[
		\mathop{\mathbb{E}}_{f \sim \mathcal{D}} \norm{x^*_{f} - \hat{x}}_1 \le \min_{\hat{x}^* \in V} \mathop{\mathbb{E}}_{f \sim \mathcal{D}} \norm{x^*_{f} - \hat{x}^*}_1 + \varepsilon
	\]
  with a probability of at least $1-\delta$ via online-to-batch conversion (i.e., we apply the online subgradient descent method to $\norm{x^*_{f_t} - \cdot}_1$ for $t=1,\dots,T$ and average the outputs). 
\end{proposition}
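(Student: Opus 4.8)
The plan is to establish the two displayed bounds separately: the first is a deterministic regret guarantee for online subgradient descent, and the second follows by a standard online-to-batch conversion specialized to the i.i.d.\ setting. The only structural facts I would need are that each loss $\ell_t(\cdot) \coloneqq \norm{x^*_t - \cdot}_1$ is convex in its argument and Lipschitz with a controllable constant, which is precisely what the generic subgradient-descent analysis consumes.

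For the regret bound, I would first note that a subgradient of $\ell_t$ at any point $\hat{x}$ is the componentwise sign vector $g_t = -\operatorname{sign}(x^*_t - \hat{x})$ (with an arbitrary value in $[-1,1]$ on coordinates where the two agree), so $\norm{g_t}_\infty \le 1$ and hence $\norm{g_t}_2 \le \sqrt{n}$; each $\ell_t$ is therefore $\sqrt{n}$-Lipschitz in the $\ell_2$-norm. Substituting the Lipschitz constant $G = \sqrt{n}$ and the diameter $D$ into the textbook regret bound for online subgradient descent with tuned step size $\eta = D/(G\sqrt{T})$ gives $\sum_{t=1}^T \ell_t(\hat{x}_t) - \min_{\hat{x}^* \in V}\sum_{t=1}^T \ell_t(\hat{x}^*) \le \Ord(DG\sqrt{T}) = \Ord(D\sqrt{nT})$, which is exactly the first display.

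For the statistical guarantee I would output the averaged iterate $\hat{x} = \frac{1}{T}\sum_{t=1}^T \hat{x}_t$ and use convexity of $\ell$ via Jensen's inequality to bound its risk by the time-average of the iterates' risks. Because the draws are i.i.d.\ and $\hat{x}_t$ depends only on $f_1,\dots,f_{t-1}$, the fresh sample $f_t$ is independent of $\hat{x}_t$, so $\mathbb{E}[\ell_t(\hat{x}_t)]$ equals the expected population risk $\mathbb{E}_{f\sim\mathcal{D}}\norm{x^*_f - \hat{x}_t}_1$ of $\hat{x}_t$. Dividing the regret bound by $T$ then yields an in-expectation excess-risk bound of order $D\sqrt{n/T}$, which is at most $\varepsilon$ once $T = \Omega((D/\varepsilon)^2 n)$.

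The main obstacle is strengthening this to the stated high-probability bound with the extra $\log(1/\delta)$ factor. Here I would observe that the centered losses $\ell_t(\hat{x}_t) - \mathbb{E}_{f\sim\mathcal{D}}\norm{x^*_f - \hat{x}_t}_1$ form a martingale difference sequence with respect to the filtration generated by $f_1,\dots,f_t$, and control their partial sums by a concentration inequality of Azuma--Hoeffding (or Freedman) type; matching the resulting $\sqrt{\log(1/\delta)}$ deviation against $\varepsilon$ is what forces $T = \Omega((D/\varepsilon)^2 n \log(1/\delta))$. The delicate point is that this concentration requires the per-round loss to be bounded (or light-tailed), i.e.\ the minimizers $x^*_f$ and iterates $\hat{x}_t$ to range over a bounded region; granting this, the partial sums concentrate as above. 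Alternatively, and more cleanly, one can invoke the high-probability online-to-batch conversion of Cutkosky directly, as cited, to reach the same sample-complexity dependence without reproving the martingale estimate.
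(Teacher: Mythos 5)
Your proposal is correct and takes essentially the same route as the paper, which proves nothing itself but defers to the standard online subgradient descent regret analysis (with the $\sqrt{n}$ Lipschitz constant of the $\ell_1$ loss in the $\ell_2$ norm, exactly as you compute) and to the online-to-batch conversions of Cesa-Bianchi et al.\ and Cutkosky for the high-probability sample-complexity statement. The boundedness caveat you rightly flag for the Azuma step is resolved in this setting because the minimizers $x^*_f$ are assumed to lie in the bounded region that $V$ is designed to cover, so the martingale differences are bounded by $\Ord(D\sqrt{n})$ and the $\log(1/\delta)$ factor follows as you describe.
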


The convex set $V$ should be designed based on prior knowledge of upcoming instances. 
For example, previous studies~\citep{Dinitz2021-sd,Sakaue2022-oc} set $V = {[-C, +C]}^\N$ for some $C>0$, which is expected to contain optimal solutions of all possible instances; then $D=2C\sqrt{n}$ holds. 
In our case, we sometimes know that the total amount of resources is fixed, i.e., $x(\N) = R$, and that every $x_i$ is always non-negative. 
Then, we may set $V = \Set{x\in[0, R]^\N}{x(\N)=R}$, whose $\ell_2$-diameter is $D=R\sqrt{2}$. 

\subsection{Time complexity bound for general M-convex function minimization}\label{sec:general-m-convex}
We here discuss how to apply the above framework to general M-convex function minimization. 
The reader interested in the main results in \cref{table:results} can skip this section and go to \cref{section:laminar-convex-minimization}. 

For an M-convex function $f\vcentcolon \Z^N \to \R \cup \set{+\infty}$, given access to $f$'s value and $\dom f$, we can implement the greedy algorithm with warm-starts so that both $\Tinit$ and $\Tloc$ are polynomially bounded.
Suppose that evaluating $f(x)$ for any $x \in \Z^N$ takes $\EO_f$ time.
Then, we can find a steepest descent direction at any $x \in \dom f$ in $\Tloc=\Ord(n^2 \EO_f)$ time by computing $f(x - e_i + e_j)$ for all $i, j \in \N$.
As for the computation of $x^\circ$, we need additional information to identify $\dom f$ (otherwise, finding any feasible solution may require exponential time in the worst case). 
Since $\dom f$ is an M-convex set, we build on a fundamental fact that any M-convex set can be written as the set of integer points in the \emph{base polyhedron} of an integer-valued \emph{submodular function}~\cite{Fujishige2005-zo,Murota2003-fo}.
A set function $\rho\vcentcolon 2^N \to \R \cup \set{+\infty}$ is called \emph{submodular} if $\rho(X) + \rho(Y) \ge \rho(X \cap Y) + \rho(X \cup Y)$ holds for $X, Y \subseteq \N$, and its \emph{base polyhedron} is defined as $\B(\rho) \coloneqq \Set{x \in \R^\N}{x(X) \le \rho(X) \;\; (X \subseteq \N),\, x(\N) = \rho(\N)}$, where $\rho(\emptyset) = 0$ and $\rho(\N) < +\infty$ are assumed.
Thus, $\dom f$ is expressed as $\dom f = \B(\rho) \cap \Z^\N$ with an integer-valued submodular function $\rho\vcentcolon 2^N \to \Z \cup \set{+\infty}$.
We assume that, for any $x \in \Z^\N$, we can minimize the submodular function $\rho + x$, defined by $(\rho + x)(X) \coloneqq \rho(X) + x(X)$ for $X \subseteq \N$, in $\SFM$ time. 
Then, we can obtain $x^\circ \in \dom f$ from $\round{\hat{x}} \in \Z^\N$ in $\Tinit=\Ord(n \SFM)$ time, as detailed in \cref{sec:projection-base-polyhedra}. 
Therefore, \cref{thm:time-complexity} implies the following bound on the total time complexity.
\begin{theorem}\label{thm:time-complexity-general}
  Given a prediction $\hat{x} \in \R^\N$, we can minimize $f$ in $\Ord(n \SFM + n^2 \EO_f \cdot \norm{x^* - \hat{x}}_1)$~time.
\end{theorem}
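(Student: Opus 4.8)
The plan is to reduce the statement to \cref{thm:time-complexity}, which already expresses the total running time of the warm-started greedy algorithm as $\Ord(\Tinit + \Tloc\norm{x^* - \hat{x}}_1)$. It then remains to bound the two oracle-dependent quantities $\Tloc$ and $\Tinit$ under the assumed access to $f$'s value oracle (cost $\EO_f$ per evaluation) and to $\dom f$ through the integer-valued submodular function $\rho$ with $\dom f = \B(\rho)\cap\Z^\N$. Substituting the two bounds derived below into \cref{thm:time-complexity} yields exactly $\Ord(n\SFM + n^2\EO_f\cdot\norm{x^* - \hat{x}}_1)$.

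Bounding $\Tloc$ is routine. At the current iterate $x \in \dom f$, \cref{step:find-direction} asks for the pair $i,j \in \N$ minimizing $f(x - e_i + e_j)$. There are $\Ord(n^2)$ ordered pairs, and evaluating $f$ at each candidate costs $\EO_f$, so a single pass that keeps a running minimum finds a steepest direction in $\Tloc = \Ord(n^2\EO_f)$ time. Infeasibility is handled implicitly, since $f(x - e_i + e_j) = +\infty$ for candidates outside $\dom f$, so no separate membership test for $\B(\rho)$ is needed during the local step.

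The substantive ingredient is the bound $\Tinit = \Ord(n\SFM)$ for computing an initial feasible point $\xcirc \in \argmin\Set{\norm{x - \round{\hat{x}}}_1}{x \in \dom f}$, i.e., an $\ell_1$-projection of $b \coloneqq \round{\hat{x}}$ onto the integer points of the base polyhedron $\B(\rho)$. I would establish this by exploiting the base-polyhedron structure: a set $X$ violates feasibility exactly when $(\rho + (-b))(X) = \rho(X) - b(X) < 0$, and a most-violated (tightest) such $X$ is found by one minimization of the submodular function $\rho + (-b)$ in $\SFM$ time, as permitted by the assumed oracle. Decreasing $b$ on such a set until $b(X) = \rho(X)$ restores the constraint there, fixes at least one coordinate at its projected value, and — by submodularity of $\rho$ — never forces a compensating increase on an already-fixed coordinate; an analogous step reconciles the total $b(\N)$ with $\rho(\N)$. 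Since each round permanently resolves at least one of the $n$ coordinates, the procedure halts after $\Ord(n)$ rounds, each dominated by a single SFM call, giving $\Tinit = \Ord(n\SFM)$; the full details are deferred to \cref{sec:projection-base-polyhedra}.

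The main obstacle is precisely this projection: one must prove both \emph{correctness} — that greedily tightening most-violated sets yields a global $\ell_1$-nearest feasible integer point, not merely some feasible point — and the $\Ord(n)$ iteration bound. Correctness relies on the lattice structure of the tight sets of $\B(\rho)$ together with the exchange property of the underlying M-convex set $\dom f$, which together guarantee that values fixed on tightened sets remain optimal for the $\ell_1$ objective, while the iteration count follows from the monotone shrinking of the set of unfixed coordinates. Once these facts are in place, combining $\Tloc = \Ord(n^2\EO_f)$ and $\Tinit = \Ord(n\SFM)$ in \cref{thm:time-complexity} gives the theorem immediately.
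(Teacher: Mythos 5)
Your overall assembly is the same as the paper's: invoke \cref{thm:time-complexity}, get $\Tloc = \Ord(n^2\EO_f)$ by brute-force evaluation of $f(x-e_i+e_j)$ over all $\Ord(n^2)$ pairs (with infeasibility handled by $f=+\infty$), and the whole theorem then rests on showing $\Tinit = \Ord(n\,\SFM)$. That last step is where your argument has a genuine gap. The paper's route is via duality: after translating so that $\round{\hat{x}} = \zeros$, one has $\norm{x}_1 = \rho(\N) - 2x^-(\N)$ for every $x \in \B(\rho)$, so the $\ell_1$-projection is exactly an integral optimal \emph{dual} solution to submodular function minimization of $\rho$ (by the Edmonds min--max theorem). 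Since SFM algorithms do not return such a dual solution directly, the paper constructs one explicitly: it evaluates $\rho^\zeros(X) = \min\Set{\rho(Y)}{Y \subseteq X}$ via restricted SFM calls, runs the greedy algorithm on $\P(\rho^\zeros)$ to get an extreme point $\tilde{x} \in \B(\rho^\zeros)$, and then lifts $\tilde{x}$ to a point of $\B(\rho)$ using $n$ saturation-capacity computations, each one SFM call --- giving $\Ord(n)$ calls with integrality guaranteed at every step.

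Your proposed projection routine --- repeatedly find a most-violated set $X$ with $\rho(X) - b(X) < 0$, decrease $b$ on $X$ until tight, and argue that each round ``fixes at least one coordinate'' --- asserts rather than proves the two claims on which everything depends. First, it is not specified how the decrease is distributed among the coordinates of $X$, and different distributions interact differently with the remaining constraints and with the subsequent increases needed to reach $b(\N) = \rho(\N)$; the statement that submodularity ``never forces a compensating increase on an already-fixed coordinate'' is exactly the kind of exchange argument that must be carried out, not cited. Second, the $\Ord(n)$ iteration bound hinges on the unproven claim that each tightening permanently resolves a coordinate; a most-violated set can contain many coordinates and tightening its total does not obviously pin down any single one. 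You correctly identify these as ``the main obstacle,'' but deferring them leaves the $\Tinit = \Ord(n\,\SFM)$ bound --- and hence the theorem --- unestablished. If you want to salvage your tightening approach, you would essentially need to rederive the min--max/integral-dual machinery the paper uses, at which point the paper's explicit greedy-plus-saturation-capacity construction is the cleaner path.
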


The current fastest M-convex function minimization algorithms run in $\Ord\prn[\big]{n^3\log \frac{L}{n} \EO_f}$ and $\Ord\big(\big(n^3 + n^2 \log\frac{L}{n}\big)$ $\big(\log\frac{L}{n} / \log n\big) \EO_f \big)$ time~\citep{Shioura2004-fr}, where $L = \max\Set{\norm{x - y}_\infty}{x, y \in \dom f}$.
Thus, our algorithm runs faster if $\norm{x^* - \hat{x}}_1 = \ord(n)$ and $\SFM = \ord(n^2\EO_f)$.
We discuss concrete situations where our approach is particularly effective in \cref{sec:discussion-time-general}.

\section{Laminar convex minimization}\label{section:laminar-convex-minimization}
This section presents how to obtain the time complexity bounds in \cref{table:results} by applying our framework to laminar convex minimization (\laminar) and its subclasses, which are special cases of M-convex function minimization (see~\citep[Section~6.3]{Murota2003-fo}). 
We first introduce the problem setting of \laminar. 

A \emph{laminar} $\mathcal{F} \subseteq 2^\N$ is a set family such that for any $X, Y \in \mathcal{F}$, either $X \subseteq Y$, $X \supseteq Y$, or $X \cap Y = \emptyset$ holds. 
For convenience, we suppose that $\mathcal{F}$ satisfies the following basic properties without loss of generality: $\emptyset \in \mathcal{F}$, $\N \in \mathcal{F}$, and $\set{i} \in \mathcal{F}$ for every $i \in \N$.
Then, \laminar is formulated as follows:
\begin{align}\label{eq:problem-laminar-convex}
  \mathop{\text{minimize}}_{x \in \Z^\N} 
  \;\; 
   \sum_{Y \in \mathcal{F}} f_Y(x(Y)) 
  & & 
  \mathop{\text{subject to}} 
  \;\; 
  x(\N) = R,\ 
  \ell_Y \le x(Y) \le u_Y \ (Y \in \mathcal{F}\setminus\set{\emptyset,\N}),
\end{align}
where each $f_Y:\R\to\R$ ($Y \in \mathcal{F}$) is a univariate convex function that can be evaluated in $\Ord(1)$ time, $R \in \Z$, and $\ell_Y \in \Z\cup\set{-\infty}$ and $u_Y \in \Z\cup\set{+\infty}$ for $Y \in \mathcal{F}$. 
We denote the objective function by $f_\mathrm{sum}(x) \coloneqq \sum_{Y \in \mathcal{F}} f_Y(x(Y))$. 
We let $f:\Z^\N\to\R\cup\set{+\infty}$ be a function such that $f(x) = f_\mathrm{sum}(x)$ if $x$ satisfies the constraints in~\eqref{eq:problem-laminar-convex} and $f(x) = +\infty$ otherwise; 
then, $f$ is M-convex and $\dom f \subseteq \Z^\N$ represents the feasible region of~\eqref{eq:problem-laminar-convex}.
\nested is a special case where $f_{\mathrm{sum}}$ is written as $\sum_{i \in \N}f_i(x_i)$ and $\Set*{Y \in \mathcal{F}}{|Y|\ge 2} = \set*{Y_1, Y_2,\dots,Y_m}$ consists of nested subsets, i.e., $Y_1 \subset Y_2 \subset\cdots\subset Y_m$, and \boxx is a special case of \nested without nested-subset constraints. 
Note that our framework in \cref{section:m-convex-framework} only requires the ground set $\N$ to be identical over instances. 
Therefore, we can use it even when both objective functions and constraints change over instances. 

It is well known that we can represent a laminar $\mathcal{F}\subseteq2^\N$ by a tree $T_\mathcal{F} = (\mathcal{V}, E)$.
The vertex set is $\mathcal{V} = \mathcal{F}\setminus\set{\emptyset}$. 
For $Y \in \mathcal{V}\setminus\set{\N}$, we call $X \in \mathcal{V}$ a \emph{parent} of $Y$ if $X$ is the unique minimal set that properly contains $Y$; 
let $p(Y) \in \mathcal{V}$ denote the parent of $Y$. 
We call $Y \in \mathcal{V}\setminus\set{\N}$ a \emph{child} of $X$ if $p(Y) = X$. 
This parent--child relation defines the set of edges as $E = \Set*{(X, Y)}{X, Y \in \mathcal{V}, \ p(Y) = X}$.   
Note that each $\set{i} \in \mathcal{V}$ corresponds to a leaf and that $\N \in \mathcal{V}$ is the root. 
For simplicity, we below suppose the tree $T_\mathcal{F} = (\mathcal{V}, E)$ to be binary without loss of generality. 
If a parent has more than two children, we can recursively divide them into one and the others, which only doubles the number of vertices. 
\Cref{fig:tree} illustrates a tree $T_\mathcal{F}$ of a laminar $\mathcal{F} = \set*{\emptyset, \{1\}, \{2\}, \{3\}, \{1,2\}, \{1,2,3\}}$. 

Applications of \laminar include resource allocation~\citep{Moriguchi2011-an}, equilibrium analysis of network congestion games~\citep{Fujishige2015-jh}, and inventory and portfolio management~\citep{Chen2021-vf}. 
We below describe a simple example so that the reader can better grasp the image of \laminar; we will also use it in the experiments in \cref{section:experiments}. 

\paragraph*{Example: staff assignment.}
We consider assigning $R$ staff members to $n$ tasks, which form the ground set $\N$. 
Each task is associated with a higher-level task. 
For example, if staff members have completed tasks $1, 2 \in \N$, they are assigned to a new task $Y=\set{1,2}$, which may involve integrating the outputs of the individual tasks. 
The dependencies among all tasks, including higher-level ones, can be expressed by a laminar $\mathcal{F} \subseteq 2^\N$. 
Each task $Y \in \mathcal{F}$ is supposed to be done by at least $\ell_Y$ ($\ge 1$) and at most $u_Y$ ($\le R$) members. 
An employer aims to assign staff members in an attempt to minimize the total perceived workload. 
For instance, if task $i\in\N$ requires $c_i> 0$ amount of work and $x_i$ staff members are assigned to it, each of them may perceive a workload of $f_i(x_i) = c_i/x_i$. 
Similarly, the perceived workload of task $Y = \set{1,2}$ is $f_Y(x(Y)) = c_Y/x(Y)$. 
The problem of assigning $R$ staff members to $n$ tasks to minimize the total perceived workload, summed over all tasks in $\mathcal{F}$, is formulated as in~\eqref{eq:problem-laminar-convex}. 
\Cref{fig:tree} illustrates two example assignments, I~and~II. 
Here, people assigned to $\set{1}$ and $\set{2}$ must do more tasks than those assigned to $\set{3}$, and hence assignment I naturally leads to a smaller total perceived workload than II.
\begin{wrapfigure}{r}{0.3\linewidth}
  \centering
  \includegraphics[width=.7\linewidth]{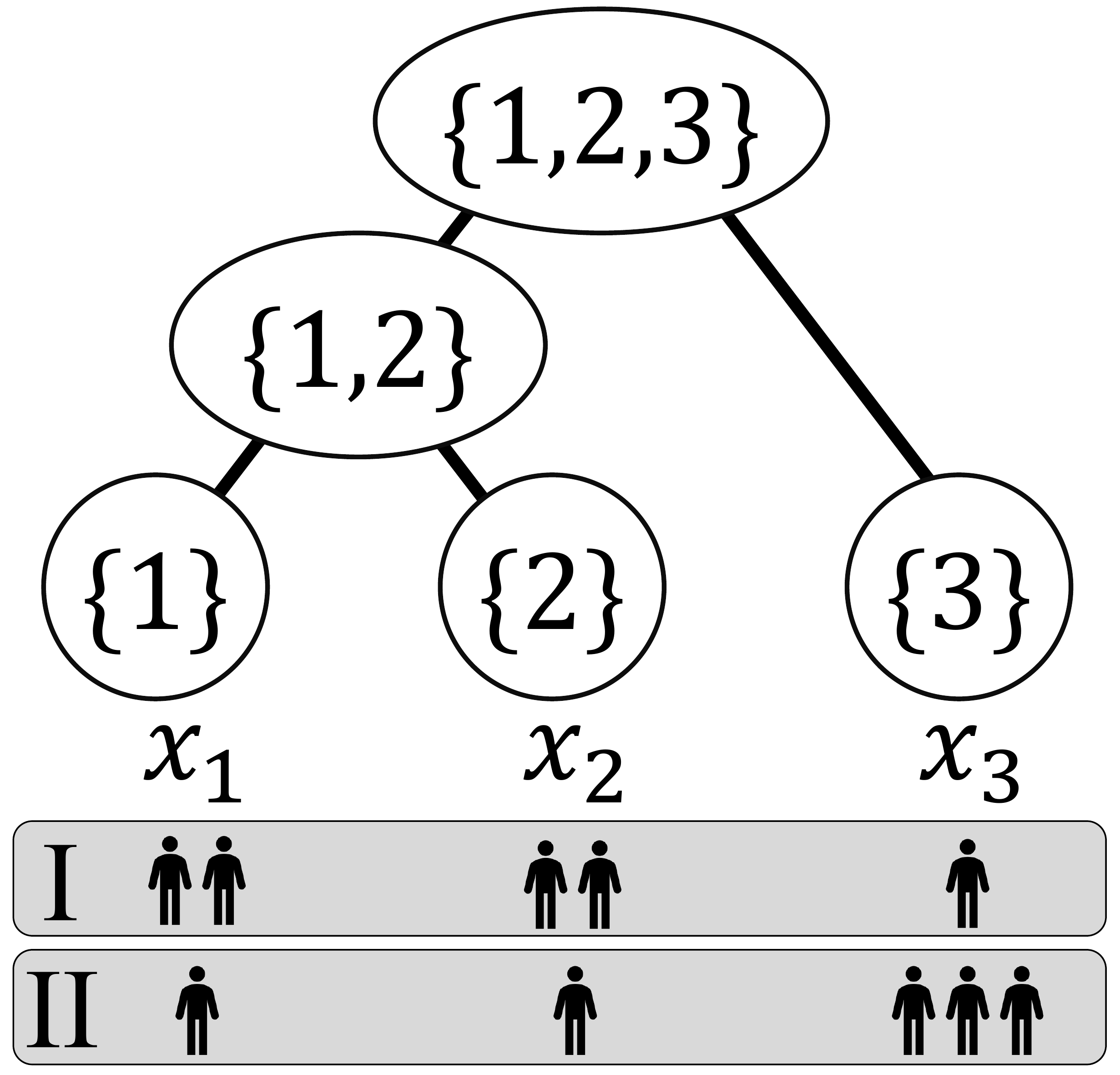}
  \caption{Image of tree $T_\mathcal{F}$. 
  Each leaf $\set{i} \in \mathcal{V}$ ($i=1,2,3$) has variable $x_i$. 
  The lower part shows example assignments.}\label{fig:tree}
  \vspace{-10pt}
\end{wrapfigure}
We can also use any convex function $f_Y$ on $[\ell_Y, u_Y]$ to model other objective functions. 
Making it faster to solve such problems with predictions enables us to manage massive allocations daily or more frequently.

Our main technical contribution is to obtain the following time complexity bound for \laminar via \cref{thm:time-complexity}, which also applies to \nested since it is a special case of \laminar.
\begin{theorem}\label{thm:time-complexity-laminar}
  For \laminar, given a prediction $\hat{x} \in \R^\N$, we can obtain an initial feasible solution $\xcirc \in \argmin\{\norm{x - \round{\hat{x}}}_1 \mathrel{|} x \in \dom f \}$ in $\Tinit=\Ord(n)$ time and find a steepest descent direction in \cref{step:find-direction} of \cref{alg:greedy} in $\Tloc=\Ord(n)$ time. 
  Thus, we can solve \laminar in $\Ord(n\norm{x^* - \hat{x}}_1)$ time.
\end{theorem}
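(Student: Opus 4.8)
The plan is to prove the theorem as three essentially independent pieces: an $\Ord(n)$ bound on the initialization $\Tinit$ (computing the $\ell_1$-projection $\xcirc$), an $\Ord(n)$ bound on the local step $\Tloc$ (finding the steepest direction in \cref{step:find-direction}), and the routine combination of the two through \cref{thm:time-complexity}. Throughout I would work on the binary tree $T_\mathcal{F}=(\mathcal{V},E)$, which has $\abs{\mathcal{V}}=\Ord(n)$ nodes, so a single bottom-up pass produces every partial sum $x(Y)=\sum_{i\in Y}x_i$ in $\Ord(n)$ time and each $f_Y$ is evaluated in $\Ord(1)$. Once both $\Tinit=\Ord(n)$ and $\Tloc=\Ord(n)$ are shown, substituting into \cref{thm:time-complexity} gives total time $\Ord\prn*{n+n\norm{x^*-\hat x}_1}$, which is the claimed bound, so the real content lies in the two structural lemmas below.

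For $\Tloc$ I would exploit that a move $-e_i+e_j$ preserves $x(\N)=R$ and alters $x(Y)$ by $-1$ exactly on the ancestors of $i$ lying strictly below $A\coloneqq\mathrm{LCA}(i,j)$, by $+1$ on those of $j$ strictly below $A$, and by $0$ on $A$ and above. Writing $d^-_Y\coloneqq f_Y(x(Y)-1)-f_Y(x(Y))$ and $d^+_Y\coloneqq f_Y(x(Y)+1)-f_Y(x(Y))$, the objective change therefore splits as
\[
  f(x-e_i+e_j)-f(x)=\sum_{\set{i}\subseteq Y\subsetneq A} d^-_Y+\sum_{\set{j}\subseteq Y\subsetneq A} d^+_Y,
\]
and feasibility amounts to $x(Y)>\ell_Y$ along the first path and $x(Y)<u_Y$ along the second. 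I would then run a bottom-up DP storing, for each node $v$, the cheapest feasible unit decrease $\mathrm{Out}(v)$ and unit increase $\mathrm{In}(v)$ over the leaves of its subtree, with cost accumulated up to and including $v$; the recurrence is $\mathrm{Out}(v)=d^-_v+\min\set{\mathrm{Out}(v_L),\mathrm{Out}(v_R)}$, gated to $+\infty$ when $x(v)\le\ell_v$ (symmetrically for $\mathrm{In}$ using $u_v$), with leaf base cases $\mathrm{Out}(\set{i})=d^-_{\set{i}}$ whenever $x_i>\ell_i$. Since every pair of distinct leaves has a unique internal LCA and sits in its two different child subtrees, the steepest direction is the minimum over internal nodes $A$ of $\mathrm{Out}(A_L)+\mathrm{In}(A_R)$ and $\mathrm{Out}(A_R)+\mathrm{In}(A_L)$, recoverable via argmin pointers; this is three $\Ord(n)$ passes.

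The harder half, and where I expect the \textbf{main obstacle} to lie, is $\Tinit$: computing an $\ell_1$-closest feasible integer point to $z\coloneqq\round{\hat x}$, i.e. solving the \laminar instance $\min\Set*{\sum_{i\in\N}\abs{x_i-z_i}}{x\in\dom f}$. I would again run a bottom-up DP, now around the \emph{sum-to-cost} functions $\phi_v(s)=\min\Set*{\sum_{i}\abs{x_i-z_i}}{x(v)=s,\ \text{the subtree of }v\text{ is feasible}}$, where $\phi_v=(\phi_{v_L}\convolZ\phi_{v_R})$ is the integer infimal convolution of the children restricted to the domain $[\ell_v,u_v]$. The structural claim I would prove is that, because each leaf cost $\abs{s-z_i}$ has slopes only in $\set{-1,+1}$, both infimal convolution and domain clipping keep every $\phi_v$ convex piecewise linear with slopes confined to $\set{-1,0,+1}$; hence $\phi_v$ is completely determined by its flat \emph{minimizer interval} $[a_v,b_v]$ together with the feasible range of $x(v)$. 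Each node then carries $\Ord(1)$ data, with $[a_v,b_v]$ obtained by adding the children's intervals and intersecting with $[\ell_v,u_v]$, so one bottom-up pass computes all intervals in $\Ord(n)$; a top-down pass fixes $x(\N)=R$ at the root and, at each node, greedily splits the parent's assigned sum into the children's minimizer intervals (clamped to their feasible ranges), recovering $\xcirc$ in $\Ord(n)$. Verifying that this slope-confinement invariant survives both operations, and that the greedy top-down split is genuinely $\ell_1$-optimal (an exchange argument that keeping each partial sum as close as possible to its own target $z(Y)$ is optimal), are the delicate points; the steepest-descent DP is routine by comparison. With $\Tinit=\Tloc=\Ord(n)$ in hand, \cref{thm:time-complexity} closes the proof.
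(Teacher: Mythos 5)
Your proposal is correct and follows essentially the same route as the paper: the paper computes $\xcirc$ by the same bottom-up construction of constant-size convex piecewise-linear node functions followed by a top-down recovery (phrased there via conjugates whose breakpoints stay at $\pm 1$ under min-sum convolution, which is exactly your slope-confinement invariant), and it finds the steepest descent direction by the same linear-time tree DP (phrased there as a shortest leaf-to-leaf path, with your LCA split appearing as the combination $\texttt{L}_{\uparrow}^Y + \texttt{L}_{\downarrow}^Y$). The one loose phrase is your parenthetical suggesting the top-down split should keep each $x(Y)$ close to $z(Y)$: the correct targets are the DP-computed minimizer intervals (i.e., $z(Y)$ clipped through the subtree constraints), which is what your stated algorithm actually uses, so this affects only the wording of the optimality argument, not its substance.
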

We prove \cref{thm:time-complexity-laminar} by describing how to obtain an initial feasible solution and find a steepest descent direction in \cref{subsec:initial-feasible-solution,subsec:find-direction}, respectively. 
In \cref{subsec:box-constrained-case}, we further reduce the time complexity bound for \boxx. 
The algorithmic techniques we use below are not so complicated and can be implemented efficiently, suggesting the practicality of our warm-start-with-prediction framework. 

\subsection{Obtaining initial feasible solution via fast convex min-sum convolution}\label{subsec:initial-feasible-solution}
We show how to compute $\xcirc \in \dom f$ in $\Tinit = \Ord(n)$ time. 
Given prediction $\hat{x} \in \R^\N$, we first compute $\round{\hat{x}}\in\Z^\N$ in $\Ord(n)$ time and then solve the following special case of \laminar to obtain $\xcirc$: 
\begin{align}\label{eq:projection-problem}
  \mathop{\text{minimize}}_{x \in \Z^\N} 
  \;\; 
  \sum_{i \in \N} |x_i - \round{\hat{x}_i}|
  & & 
  \mathop{\text{subject to}} 
  \;\; 
  x(\N) = R,\  
  \ell_Y \le x(Y) \le u_Y \ (Y \in \mathcal{F}\setminus\set{\emptyset,\N}).
\end{align}
Note that it suffices to find an integer optimal solution to the continuous relaxation of~\eqref{eq:projection-problem} since all the input parameters are integers. 
Thus, we below discuss how to solve the continuous relaxation of~\eqref{eq:projection-problem}. 

\begin{wrapfigure}{r}{0.22\linewidth}
  \centering
  \includegraphics[width=.72\linewidth]{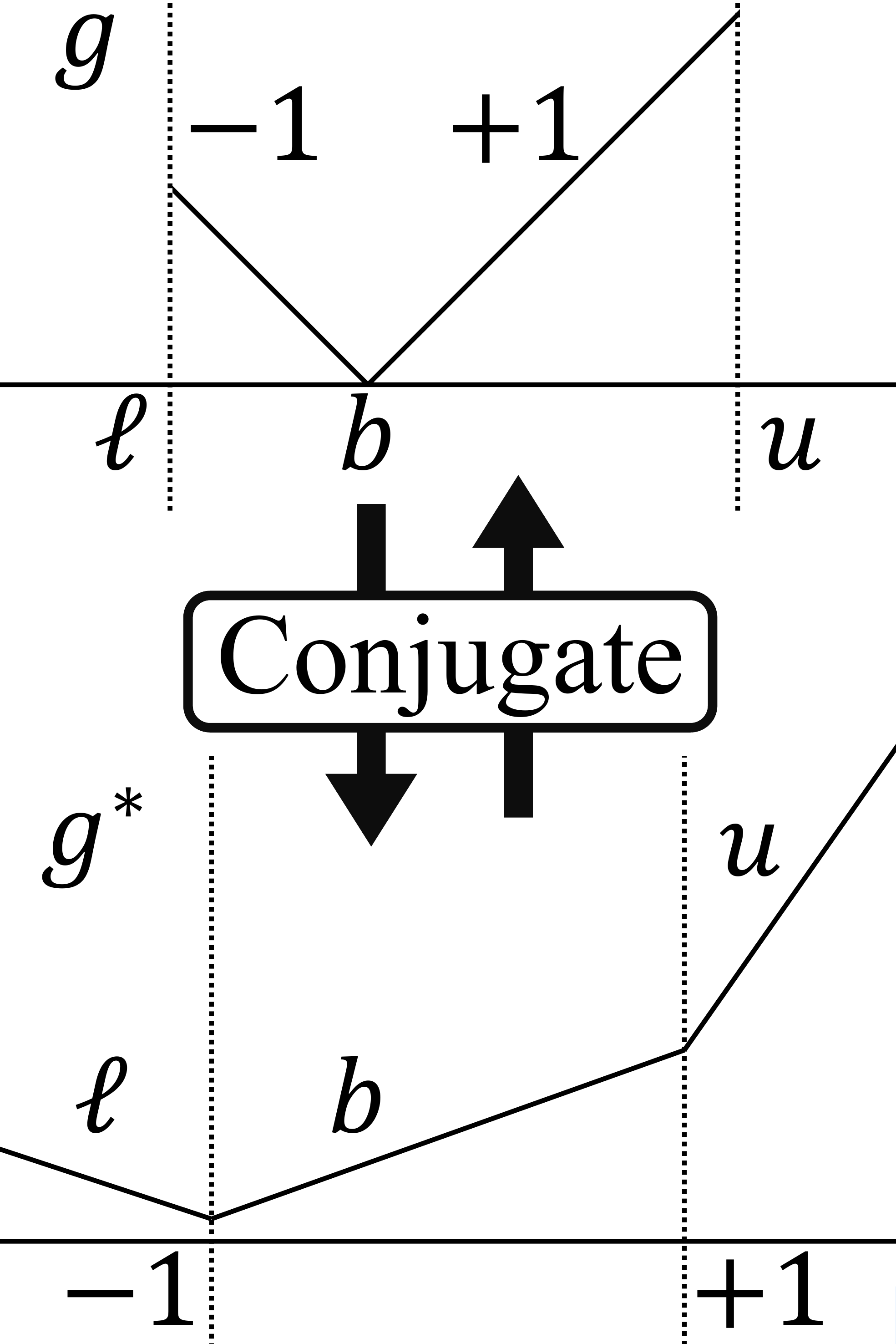}
  \caption{Conjugate relation of $g$ and $g^*$. 
  }
  \label{fig:conjugate}
\end{wrapfigure}
Solving~\eqref{eq:projection-problem} naively may be as costly as solving the original \laminar instance. 
Fortunately, however, we can solve it much faster using the special structure of the $\ell_1$-norm objective function. 
The method we describe below is based on the fast convex min-sum convolution~\citep{Tseng1996-zd}, which immediately provides an $\Ord(n\log^2 n)$-time algorithm for solving~\eqref{eq:projection-problem}. 
We simplify it and eliminate the logarithmic factors by using the fact that the objective function has only two kinds of slopes, $\pm1$. 

We suppose that each non-leaf vertex $Y \in \mathcal{V}$ in $T_\mathcal{F} = (\mathcal{V}, E)$ has a variable $x_Y \in \R$, in addition to the original variables $x_i$ for leaves $\set{i} \in \mathcal{V}$. 
We consider assigning a univariate function $g:\R\to\R \cup \set{+\infty}$ of the following form to each vertex in $\mathcal{V}$:
\begin{align}\label{eq:gfunc}
  g(x) = |x - b| + \delta_{[\ell, u]}(x),
\end{align}
where $\ell, b, u \in\Z\cup\set{\pm\infty}$ and $\ell\le b\le u$.
Note that if $g$ is given by~\eqref{eq:gfunc} up to an additive constant, its convex conjugate $g^*(p) = \sup\Set{\inpr{p, x} - g(x)}{x \in \R}$ is a piecewise-linear function whose slope is 
$\ell$ if $p \le -1$, 
$b$ if $-1 \le p \le +1$, and 
$u$ if $p \ge +1$ (where $\ell=b$ and/or $b=u$ can occur). 
\Cref{fig:conjugate} illustrates this conjugate relation. 
We below construct such functions in a bottom-up manner on $T_\mathcal{F}$. 

First, we assign function $g_i(x_i) = |x_i - \round{\hat{x_i}}| + \delta_{[\ell_i, u_i]}(x_i)$ to each leaf $\set{i} \in \mathcal{V}$, which represents the $i$th term of the objective function and the constraint on $x_i$ in~\eqref{eq:projection-problem}. 
Next, given two functions $g_X(x_X) = |x_X - b_X| + \delta_{[\ell_X, u_X]}(x_X)$ and $g_Y(x_Y) = |x_Y - b_Y| + \delta_{[\ell_Y, u_Y]}(x_Y)$ of $X, Y \in \mathcal{V}$ with an identical parent $X\cup Y \in \mathcal{V}$, we construct the parent's function as $g_{X\cup Y} = (g_X \convol g_Y) + \delta_{[\ell_{X\cup Y}, u_{X\cup Y}]}$, where $(g_X \convol g_Y)(x_{X\cup Y}) \coloneqq \min\Set*{g_X(x_X) + g_Y(x_Y)}{x_X + x_Y = x_{X\cup Y}}$ is the infimal convolution. 
We can confirm that $g_{X\cup Y}$ also takes the form of~\eqref{eq:gfunc} as follows.
Since $g_X$ and $g_Y$ are of the form~\eqref{eq:gfunc}, $g_X^*$ and $g_Y^*$ have the same breakpoints, $\pm1$ (see \cref{fig:conjugate}). 
Furthermore, since $g_X \convol g_Y = (g_X^* + g_Y^*)^*$ holds (e.g., \citep[Theorem~16.4]{Rockafellar1970-cg}), $g_X \convol g_Y$ takes the form of~\eqref{eq:gfunc} with $\ell = \ell_X + \ell_Y$, $b = b_X + b_Y$, and $u = u_X + u_Y$. 
Finally, adding $\delta_{[\ell_{X\cup Y}, u_{X\cup Y}]}$ preserves the form of~\eqref{eq:gfunc}. 
We can compute resulting $\ell$, $b$, and $u$ values of $g_{X\cup Y}$ in $\Ord(1)$ time, and hence we can obtain $g_Y$ for all $Y \in \mathcal{V}$ in a bottom-up manner in $\Ord(n)$ time. 
By construction, for each $Y \in \mathcal{V}$, $g_Y(x_Y)$ indicates the minimum objective value corresponding to the subtree, $(\mathcal{V}_Y, E_Y)$, rooted at $Y$ when $x_Y$ is given. That is, we have 
\[
  g_Y(x_Y) = \min\Set*{\textstyle{\sum_{i\in Y}} |x_i - \round{\hat{x}_i}|}{x(Y)=x_Y,\,\, \ell_{Y^\prime}\le x({Y^\prime})\le u_{Y^\prime}\, ({Y^\prime} \in \mathcal{V}_Y\setminus\set{Y})}
\]
up to constants ignored when constructing $g_Y$, where $g_Y(x_Y) = +\infty$ if the feasible region is empty. 
Thus, $g_\N(R)$ corresponds to the minimum value of~\eqref{eq:projection-problem}, and our goal is to find integer values $x_Y$ for $Y\in\mathcal{V}$ that attain the minimum value when $x_\N = R \in \Z$ is fixed. 

Given $g_Y$ constructed as above, we can compute desired $x_Y$ values in a top-down manner as follows. 
Let $X \cup Y \in \mathcal{V}$ be a non-leaf vertex with two children $X$ and $Y$. 
Once $x_{X \cup Y} \in \dom g_{X \cup Y}$ is fixed, we can regard $\min\Set*{g_X(x_X) + g_{Y}(x_{Y})}{x_X + x_{Y} = x_{X \cup Y}}$ ($=g_{X\cup Y}(x_{X\cup Y})$) as univariate convex piecewise-linear minimization with variable $x_X \in \R$ (since $x_Y = x_{X\cup Y} - x_X$), which we can solve in $\Ord(1)$ time. 
Moreover, since $x_{X\cup Y}$ and all the parameters of $g_X$ and $g_Y$ are integers, we can find an integral minimizer $x_X \in \Z$ (and $x_Y = x_{X\cup Y} - x_X \in \Z$). 
Starting from $x_\N = R \in \Z$, we thus compute $x_Y$ values for $Y \in \mathcal{V}$ in a top-down manner, which takes $\Ord(n)$ time. 
The resulting $x_i$ value for each leaf $\set{i} \in \mathcal{V}$ gives the $i$th element of a desired initial feasible solution $\xcirc \in \dom f$.

\subsection{Finding steepest descent direction via dynamic programming}\label{subsec:find-direction}
We present a dynamic programming (DP) algorithm to find a steepest descent direction in $\Tloc = \Ord(n)$ time. 
Our algorithm is an extension of that used in~\citep{Moriguchi2011-an}. 
The original algorithm finds $i$ that minimizes $f(x-e_i+e_j)$ for a fixed $j$ in $\Ord(n)$ time. 
We below extend it to find a pair of $(i,j)$ in $\Ord(n)$ time.

Let $x \in \dom f$ be a current solution before executing \cref{step:find-direction} in \cref{alg:greedy}. 
We define a directed edge set, $A_x$, on the vertex set $\mathcal{V}$ as follows: 
\begin{align}
  A_x = \Set*{(p(Y), Y)}{Y \in \mathcal{V}\setminus\set{\N},\ x(Y) < u_Y} \cup \Set*{(Y, p(Y))}{Y \in \mathcal{V}\setminus\set{\N},\ x(Y) > \ell_Y}.
\end{align}
Note that $x - e_i + e_j$ is feasible if and only if $(\mathcal{V}, A_x)$ has a directed path from $\set{i} \in \mathcal{V}$ to $\set{j}\in \mathcal{V}$. 
We then assign an edge weight $w_{X,Y}$ to each $(X, Y) \in A_x$ defined as
\begin{align}
  w_{X,Y} = \begin{cases}
    f_Y(x(Y) + 1) - f_Y(x(Y)) & \text{if $X = p(Y)$},\\
    f_X(x(X) - 1) - f_X(x(X)) & \text{if $Y = p(X)$}.
  \end{cases}
\end{align}
By the convexity of $f_Y$, we have $w_{p(Y),Y} \ge w_{Y,p(Y)}$, i.e., there is no negative cycle.
If $x - e_i + e_j$ is feasible, $f_\mathrm{sum}(x - e_i + e_j) - f_\mathrm{sum}(x)$ is equal to the length of a shortest path 
from $\set{i}$ to $\set{j}$ with respect to the edge weights $w_{X, Y}$ (see~\citep[Section~3.3]{Moriguchi2011-an}).
Therefore, finding a steepest descent direction, $-e_i+e_j$, reduces to the problem of finding a shortest leaf-to-leaf path in this (bidirectional) tree $T_x\coloneqq (\mathcal{V}, A_x)$. 
Constructing this tree takes $\Ord(n)$ time. 

We present a DP algorithm for finding a shortest leaf-to-leaf path.
For $Y \in \mathcal{V}$, we denote by $T_x(Y)$ the subtree of $T_x$ rooted at $Y$.
Let $\texttt{L}_{\uparrow}^Y$ be the length of a shortest path from a leaf to $Y$ in $T_x(Y)$, $\texttt{L}_{\downarrow}^Y$ the length of a shortest path from $Y$ to a leaf in $T_x(Y)$, and $\texttt{L}^Y_\triangle$ the length of a shortest path between any leaves in $T_x(Y)$.
Clearly, $\texttt{L}_{\uparrow}^Y = \texttt{L}_{\downarrow}^Y = \texttt{L}^Y_\triangle = 0$ holds if $Y$ is a leaf in $T_x$.
For a non-leaf vertex $Y \in \mathcal{V}$, let $\mathcal{C}(Y)$ denote the set of children of $Y$ in $T_x$. 
We have the following recursive formulas:
\begin{align}
  \texttt{L}_{\uparrow}^Y = \min_{\makebox[30pt]{$\substack{X \in \mathcal{C}(Y):\\(X, Y) \in A_x}$}} \set[\big]{\texttt{L}_{\uparrow}^X + w_{X,Y}},
  \;\;
  \texttt{L}_{\downarrow}^Y = \min_{\makebox[30pt]{$\substack{X \in \mathcal{C}(Y):\\(Y, X) \in A_x}$}} \set[\big]{\texttt{L}_{\downarrow}^X + w_{Y,X}},
  \;\;
  \texttt{L}^Y_\triangle = \min\set[\Big]{\texttt{L}_{\uparrow}^Y + \texttt{L}_{\downarrow}^Y, \min_{X \in \mathcal{C}(Y)} \texttt{L}_{\triangle}^X}, 
\end{align}
where we regard the minimum on an empty set as $+\infty$. 
Note that, if shortest leaf-to-$Y$ and $Y$-to-leaf paths in $T_x(Y)$ are not edge-disjoint, there must be a leaf-to-leaf simple path in $T_x(Y)$ whose length is no more than $\texttt{L}_{\uparrow}^Y + \texttt{L}_{\downarrow}^Y$ since no negative cycle exists.
According to these recursive formulas, we can compute $\texttt{L}_{\uparrow}^Y, \texttt{L}_{\downarrow}^Y$, and $\texttt{L}^Y_\triangle$ for all $Y \in \mathcal{V}$ in $\Ord(n)$ time by the bottom-up DP on $T_x$. 
Then, $\texttt{L}^N_\triangle$ is the length of a desired shortest leaf-to-leaf path, and its leaves $\set{i}, \set{j} \in \mathcal{V}$ can be obtained by backtracking the DP table in $\Ord(n)$ time. 
Thus, we can find a desired direction $-e_i+e_j$ in $\Ord(n)$ time.

\subsection{Faster steepest descent direction finding for box-constrained case}\label{subsec:box-constrained-case}
We focus on \boxx~\eqref{eq:problem-box} and present a faster method to find a steepest descent direction, which takes only $\Tloc = \Ord(\log n)$ time after an $\Ord(n)$-time pre-processing. 
Note that we can obtain an initial feasible solution with the same method as in \cref{subsec:initial-feasible-solution}; hence $\Tinit=\Ord(n)$ also holds in the \boxx case. 

\begin{theorem}\label{thm:time-complexity-box}
  For \boxx, given a prediction $\hat{x} \in \R^\N$, after an $\Ord(n)$-time pre-processing (that can be included in $\Tinit = \Ord(n)$), we can find a steepest descent direction in \cref{step:find-direction} of \cref{alg:greedy} in $\Tloc=\Ord(\log n)$ time. 
  Thus, we can solve \boxx in $\Ord(n + \log n \cdot \norm{x^* - \hat{x}}_1)$ time.
\end{theorem}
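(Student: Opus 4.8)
The plan is to exploit the fact that \boxx has no intermediate subset constraints, so that the only restrictions on a move $-e_i+e_j$ are the per-variable box constraints, which decouple. Since any $-e_i+e_j$ with $i\ne j$ automatically preserves $x(\N)=R$, the point $x-e_i+e_j$ is feasible if and only if $x_i > \ell_i$ and $x_j < u_j$. Moreover, the objective change separates as
\[
  f(x - e_i + e_j) - f(x) = \underbrace{\prn*{f_i(x_i - 1) - f_i(x_i)}}_{=:d_i^-} + \underbrace{\prn*{f_j(x_j + 1) - f_j(x_j)}}_{=:d_j^+}.
\]
Hence finding a steepest descent direction reduces to finding a feasible index $i$ (with $x_i>\ell_i$) minimizing $d_i^-$ and a feasible index $j$ (with $x_j<u_j$) minimizing $d_j^+$, subject only to $i\ne j$. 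This decoupling is exactly what lets us bypass the $\Ord(n)$-time shortest-path DP of \cref{subsec:find-direction}.

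First I would maintain two min-heaps (binary heaps with an auxiliary array recording each index's position), namely $H^-$ keyed by $d_i^-$ over all $i$ with $x_i>\ell_i$ and $H^+$ keyed by $d_j^+$ over all $j$ with $x_j<u_j$. During the $\Ord(n)$-time pre-processing (subsumed in $\Tinit$), we compute $d_i^-$ and $d_i^+$ for the initial solution $\xcirc$ of \cref{subsec:initial-feasible-solution} in $\Ord(1)$ each, since every $f_i$ is evaluable in $\Ord(1)$ time, determine initial feasibility, and heapify both structures in $\Ord(n)$. Note that the linear-time build is what keeps the pre-processing within $\Ord(n)$; a balanced-BST variant would incur an undesirable $\Ord(n\log n)$ factor.

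To answer a query in \cref{step:find-direction}, I would read the two smallest entries of each heap: in a binary heap the minimum is the root and the runner-up is the smaller of its two children, so the top two of each heap are available in $\Ord(1)$ (treating a missing entry as $+\infty$). Writing $(i_1,a_1),(i_2,a_2)$ for the two smallest of $H^-$ and $(j_1,b_1),(j_2,b_2)$ for those of $H^+$, the optimal pair is $(i_1,j_1)$ when $i_1\ne j_1$; otherwise the constraint $i\ne j$ forces the optimum to be the cheaper of $(i_1,j_2)$ and $(i_2,j_1)$, both of which have distinct indices. \Cref{alg:greedy} then terminates if the resulting objective change is nonnegative and otherwise performs the move. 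Thus a query costs $\Ord(1)$, comfortably within $\Ord(\log n)$.

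The crux is updating the heaps after a move $x\gets x-e_i+e_j$, in which only the two indices $i$ and $j$ change. Because $x_i$ decreases by one, $d_i^-$ and $d_i^+$ must be recomputed; if the new $x_i$ equals $\ell_i$ then $i$ leaves $H^-$, while $x_i$ necessarily drops strictly below $u_i$, so $i$ enters (or remains in) $H^+$. The updates for $j$ are symmetric. Each such key change, insertion, or deletion touches $\Ord(1)$ indices and costs $\Ord(\log n)$ in a position-indexed binary heap, so maintaining both heaps per iteration is $\Ord(\log n)$, giving $\Tloc=\Ord(\log n)$. Combining this with the $\Ord(\norm{x^*-\hat{x}}_1)$ iteration bound of \cref{thm:time-complexity} and $\Tinit=\Ord(n)$ yields the claimed $\Ord(n+\log n\cdot\norm{x^*-\hat{x}}_1)$ running time. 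The one place demanding care is the bookkeeping that decides, at each move, which indices enter or leave each heap as the box boundaries $\ell_i$ and $u_i$ are hit; the rest is a routine use of a priority queue.
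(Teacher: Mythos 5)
Your proposal is correct and follows essentially the same route as the paper's proof: two position-indexed binary min-heaps over the marginal differences $f_k(x_k-1)-f_k(x_k)$ and $f_k(x_k+1)-f_k(x_k)$, built in $\Ord(n)$ time and maintained with $\Ord(\log n)$ key updates per iteration (the paper encodes feasibility by adding indicator terms $\delta_{[\ell_k+1,u_k]}$ and $\delta_{[\ell_k,u_k-1]}$ to the keys rather than inserting/deleting heap entries, which is an immaterial difference). Your extra care with the $i\ne j$ tie via the top-two elements is a nice touch, though unnecessary: if the two heap minima coincide at some index $i$, convexity gives $\Delta_i^-+\Delta_i^+\ge 0$, hence $\Delta_{i'}^-+\Delta_{j'}^+\ge 0$ for all pairs and the algorithm correctly terminates.
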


\begin{proof}
  In the \boxx case, $f(x - e_i + e_j) - f(x) = f_i(x_i - 1) - f_i(x_i) + f_j(x_j + 1) - f_j(x_j)$ holds if $x$ and $x-e_i+e_j$ are feasible. 
  Furthermore, we only need to care about the box constraints, $\ell_i \le x_i \le u_i$ for $i = 1,\dots,n$ (since $x(\N) = R$ is always satisfied due to the update rule).
  Therefore, by keeping $\Delta_k^- \coloneqq f_k(x_k - 1) - f_k(x_k) + \delta_{[\ell_k+1, u_k]}(x_k)$ and $\Delta_k^+ \coloneqq f_k(x_k + 1) - f_k(x_k) + \delta_{[\ell_k, u_k-1]}(x_k)$ values for $k = 1,\dots,n$ with two min-heaps, respectively, we can efficiently find $i \in \argmin\set{\Delta_k^-}_{k=1}^n$ and $j \in \argmin\set{\Delta_k^+}_{k=1}^n$; then, $-e_i+e_j$ is a steepest descent direction. 
  More precisely, at the beginning of \cref{alg:greedy}, we construct the two heaps that maintain $\Delta_k^-$ and $\Delta_k^+$ values, respectively, and two arrays that keep track of the location of each element in the heaps; this pre-processing takes $\Ord(n)$ time. 
  Then, in each iteration of \cref{alg:greedy}, we can find a steepest descent direction $-e_i+e_j$, update $\Delta_i^-$, $\Delta_i^+$, $\Delta_j^-$, and $\Delta_j^+$ values (by the so-called increase-/decrease-key operations), and update the heaps and arrays in $\Tloc = \Ord(\log n)$~time. 
  Thus, \cref{thm:time-complexity} implies the time complexity. 
\end{proof}

\section{Experiments}\label{section:experiments}

\begin{figure}
  \centering
  \includegraphics[width=1.0\linewidth]{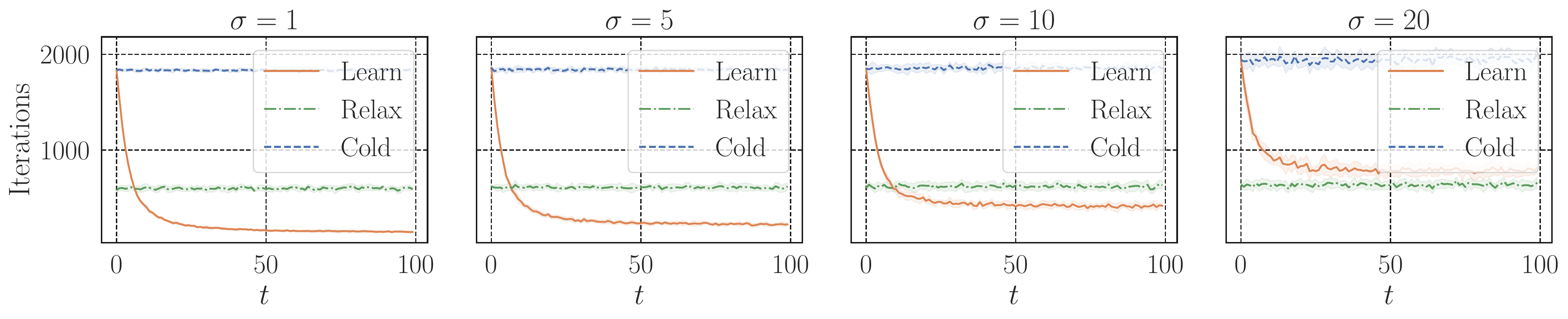}
  \caption{The number of iterations of the greedy algorithm initialized with Learn, Relax, and Cold. The curve and error band show the mean and standard deviation of $10$ independent runs, respectively.}
  \label{fig:experiments}
\end{figure}

We complement our theoretical results with experiments. 
We used MacBook Air with Apple M2 chip, \SI{24}{GB} of memory, and macOS Ventura 13.2.1. 
We implemented algorithms in Python 3.9.12 with libraries such as NumPy 1.23.2. 
We used Gurobi 10.0.1~\citep{Gurobi_Optimization_LLC2023-ae} for a baseline method explained later. 

We consider the staff-assignment setting described in \cref{section:laminar-convex-minimization} with $R=12800$ staff members and $n=128$ tasks. 
Let $T_\mathcal{F} = (\mathcal{V}, E)$ be a complete binary tree with $n$ leaves. 
Define an objective function and inequality constraints as $f_\mathrm{sum}(x) = \sum_{Y \in \mathcal{V}}c_Y/x(Y)$ and  $\ell_Y \le x(Y) \le R$ for $Y \in \mathcal{F}\setminus\set{\emptyset, \N}$, respectively, where $c_Y$ and $\ell_Y$ values are given as follows. 
We set $c_Y = \max\set{\sum_{i \in Y}i + \sigma u_a, 1}$, where $u_a$ follows the standard normal distribution and $\sigma$ controls the noise strength. 
We let $\ell_Y = \min\set{2^h + u_b, R/2^{n-h}}$, where $h \in \set{0,1,\dots,\log n}$ is the height of $Y$ in $T_\mathcal{F}$ (a leaf $Y$ has $h=0$) and $u_b$ is drawn uniformly at random from $\set*{0,1,\dots,50}$; the minimum with $R/2^{n-h}$ is taken to ensure that the feasible region is non-empty. 
We thus create a dataset of $T = 100$ random instances for each $\sigma \in \set{1, 5, 10, 20}$.  
We generate $10$ such random datasets independently to calculate the mean and standard deviation of the results. 
The $T$ instances arrive one by one and we learn predictions from optimal solutions to past instances online.
By design of $c_Y$, the $i$th entry of an optimal solution tends to be larger as $i$ increases, which is unknown in advance and should be reflected on predictions $\hat{x}$ by learning from optimal solutions to past instances. 

We learn predictions $\hat{x}_t \in \R^\N$ for $t=1,\dots,T$ by using the online subgradient descent method on $V = \Set{x\in[0, R]^\N}{x(\N)=R}$ with a step size of $0.01 \sqrt{R/n}$ (where the projection onto $V$ is implemented as in~\citep{Bauckhage2020-mb}).
We use the all-one vector multiplied by $R/n$ as an initial prediction, $\hat{x}_0 \in V$, and set the $t$th prediction, $\hat{x}_t$, to the average of past $t$ outputs, based on online-to-batch conversion. 
We denote this method by ``Learn.''
We also use two baseline methods, ``Cold'' and ``Relax'', which obtain initial feasible solutions of the greedy algorithm as follows.  
Cold always uses $\hat{x}_0$ as an initial feasible solution. 
Relax is a variant of the continuous relaxation approach~\citep{Moriguchi2011-an}, the fastest method for \laminar with quadratic objectives. 
Given a new instance, Relax first solves its continuous relaxation (using Gurobi), where the objective function is replaced with its quadratic approximation at $\hat{x}_0$, and then converts the obtained solution into an initial feasible solution, as with our method. 
Note that Relax requires information on newly arrived instances, unlike Learn and Cold.  
Thus, Relax naturally produces good initial feasible solutions while incurring the overhead of solving new relaxed problems. 
We compare those initialization methods in terms of the number of iterations of the greedy algorithm. 

\Cref{fig:experiments} compares Learn, Relax, and Cold for each noise strength $\sigma$. 
Learn always outperforms Cold, and it does even Relax if $\sigma \le 10$, suggesting that under moderate noise levels, learning predictions from past optimal solutions can accelerate the greedy algorithm more effectively than solving the relaxed problem of a new instance. 
The advantage of Learn decreases as $\sigma$ increases, as expected.

\section{Conclusion and limitations}
We have extended the idea of warm-starts with predictions to M-convex function minimization. 
By combining our framework with algorithmic techniques, we have obtained specific time complexity bounds for \laminar, \nested, and \boxx. 
Those bounds can be better than the current best worst-case bounds given accurate predictions, which we can provably learn from past data. 
Experiments have confirmed that using predictions reduces the number of iterations of the greedy algorithm. 

We are aware that our experiments have a limited impact, although they have served the purpose of confirming the benefit of using predictions. 
Improving the performance for large real-world instances involves tailored methods for learning predictions, which is beyond the scope of this paper and left for future work.
Also, we have mostly focused on the subclasses of M-convex function minimization. 
Extending the framework to other problem classes is an exciting future direction. 
A technical open problem is eliminating \cref{assumption:tie-break}, although it hardly matters in practice. 
We expect that the idea of~\citep{Sakaue2023-ym} for the L-/\Ln-convex case is helpful, but it seems more complicated in the M-convex case.

\section*{Acknowledgments}
The authors thank Satoru Iwata for his advice on the integrality of the dual problem of submodular function minimization.
This work was supported by JST ERATO Grant Number JPMJER1903 and JSPS KAKENHI Grant Number JP22K17853.

\printbibliography[heading=bibintoc]

\newpage
\appendix

\section{Missing details in \texorpdfstring{\cref{sec:general-m-convex}}{Section~\ref{sec:general-m-convex}}}

\subsection{Projection onto base polyhedra via submodular function minimization}\label{sec:projection-base-polyhedra}

We prove \cref{thm:time-complexity-general} by presenting how to project the rounded point $\round{\hat{x}} \in \Z^\N$ of a prediction $\hat{x} \in \R^\N$ onto the effective domain $\dom f$ of a general M-convex function $f\vcentcolon \Z^\N \to \R \cup \set{+\infty}$.
Recall that we have access to the submodular function $\rho\vcentcolon 2^\N \to \Z \cup \set{+\infty}$ with $\dom f = \B(\rho) \cap \Z^\N$ and that we can minimize $\rho + x$ in time $\SFM$ for any $x \in \Z^V$.
Without loss of generality, we assume $\round{\hat{x}}$ to be all zeros, denoted by $\zeros$; otherwise, we can replace $\rho$ with $\rho - \round{\hat{x}}$ since $\B(\rho - \round{\hat{x}}) = \Set{x - \round{\hat{x}}}{x \in \B(\rho)}$ holds (the \emph{translation} of a base polyhedron~\citep{Fujishige2005-zo}).
We below discuss how to compute the $\ell_1$-projection, $\xcirc \in \argmin\Set{\norm{x}_1}{x \in \B(\rho)}$. 

For $x \in \B(\rho)$, we have $\norm{x}_1 = x(N) - 2x^-(N) = \rho(N) - 2x^-(N)$, where $x^- \coloneqq {(\min\set{x_i, 0})}_{i \in N}$.
Thus, it holds that $\argmin\Set{\norm{x}_1}{x \in \B(\rho)}
= \argmax\Set{x^-(N)}{x \in \B(\rho)}$.
The min-max theorem of submodular function minimization~\citep{Edmonds1970-zb,Murota2003-fo,Fujishige2005-zo} claims that the minimum value of $\rho(X)$ over $X \subseteq N$ coincides with the maximum value of $x^-(N)$ over $x \in \B(\rho)$, and there exists an integral dual optimal solution if $\rho$ is integer-valued.
Therefore, we can project $\round{\hat{x}} = \zeros$ onto $\dom f$ by computing an integral optimal dual solution to submodular function minimization of $\rho$.
However, no existing submodular function minimization algorithm directly returns an integral optimal dual solution, even if the objective function is integer-valued. 
Hence, we present a procedure to obtain an integral optimal dual solution that calls a submodular function minimization algorithm $\Ord(n)$ times.

We first rewrite the dual problem $\max\Set{x^-(N)}{x \in \B(\rho)}$ as $\max\Set{x(N)}{x \in \P(\rho),\,x \le \zeros}$, where $\le$ means the element-wise comparison and
\begin{align}
  \P(\rho) \coloneqq \Set{x \in \R^\N}{x(X) \le \rho(X) \;\; (X \subseteq \N)}
\end{align}
is the \emph{submodular polyhedron} of $\rho$.
Note that if $\tilde{x} \in \P(\rho)$ is an optimal solution to the rewritten problem, any point $x^\circ \in \B(\rho)$ with $x^\circ \ge \tilde{x}$ is an optimal solution to the original dual problem.
The maximizer set of the rewritten problem is the base polyhedron of a submodular function $\rho^\zeros$ defined by $\rho^\zeros(X) \coloneqq \min\Set{\rho(Y)}{Y \subseteq X}$ for $X \subseteq N$~\citep[Section~3.1]{Fujishige2005-zo}.
Thus, we can reduce the evaluation of $\rho^\zeros(X)$ for $X \subseteq \N$ to minimization of $\rho + x$ with $x \in \Z^\N$ such that $x_i$ is $0$ for $i \in X$ and a sufficiently large constant for $i \in \N \setminus X$.
We can obtain an (extreme) point $\tilde{x} \in \B(\rho^\zeros)$ with the \emph{greedy algorithm} on the submodular polyhedron $\P(\rho^\zeros)$; 
that is, we set $\tilde{x}_i = \rho^{\zeros}(\set{1, \dotsc, i}) - \rho^{\zeros}(\set{1, \dotsc, i-1})$ for $i \in N$~\citep[Section~3.2]{Fujishige2005-zo}.
Thus, we can compute $\tilde{x}$ in $\Ord(n\SFM)$ time.
We then convert $\tilde{x}$ back into an optimal solution to the original dual problem by computing a point $x^\circ \in \B(\rho)$ with $x^\circ \ge \tilde{x}$.
To this end, we again use (another form of) the greedy algorithm: initializing $x^\circ$ as $\tilde{x}$, for $i = 1$ to $n$, we put $x^\circ \gets x^\circ + \hat{c}(x^\circ, e_i) e_i$, where
\begin{align}
  \hat{c}(x^\circ, e_i)
  \coloneqq \max\Set{\lambda \in \R}{x^\circ + \lambda e_i \in \P(\rho)}
  = \min\Set{\rho(X) - x^\circ(X)}{X \subseteq N, \, i \in X}
\end{align}
is the \emph{saturation capacity}~\cite{Fujishige2005-zo}.
We can compute $\hat{c}(x^\circ, e_i)$ in time $\SFM$ in the same way as evaluation of $\rho^\zeros(X)$.
Since $\tilde{x}$ and $x^\circ$ are integral, the resulting $x^\circ$ is the desired projection.
To conclude, we can compute a projection via $\Ord(n)$ calls to submodular function minimization, i.e., $\Tinit = \Ord(n\SFM)$.

\subsection{Discussion on time complexity bounds for general M-convex function minimization}\label{sec:discussion-time-general}

We discuss some scenarios where our algorithm given in \cref{sec:general-m-convex} can be faster than general M-convex function minimization algorithms.
For a general M-convex function $f\vcentcolon \Z^\N \to \R \cup \set{+\infty}$, our algorithm takes $\Tinit=\Ord(n \SFM)$ time for projection and $\Tloc=\Ord(n^2\EO_f)$ time for finding a steepest descent direction, which results in the total time complexity of $\Ord(\Tinit + \Tloc \norm{x^* - \hat{x}}_1) = \Ord(n \SFM + n^2 \EO_f \cdot \norm{x^* - \hat{x}}_1)$ as described in \cref{thm:time-complexity-general}.
Here, for a given $x \in \Z^\N$, $\EO_f$ and $\SFM$ denote the time to evaluate $f(x)$ and to minimize $\rho + x$, respectively, where $\rho\vcentcolon 2^\N \to \R \cup \set{+\infty}$ is the submodular function representing $\dom f$.
The current fastest M-convex function minimization algorithms run in $\Ord\prn[\big]{n^3\log \frac{L}{n} \EO_f}$ and $\Ord\prn[\big]{\prn[\big]{n^3 + n^2 \log\frac{L}{n}}\prn[\big]{\log\frac{L}{n} / \log n} \EO_f}$ time~\citep{Shioura2004-fr},\footnote{%
The algorithms in~\citep{Shioura2004-fr} require a feasible initial point $x^\circ \in \dom f$ as input.
If the finite- and integer-valued submodular function $\rho\vcentcolon 2^\N \to \Z$ representing $\dom f$ is given instead of $x^\circ$, we can obtain a point in $\dom f$ by the greedy algorithm on $\P(\rho)$ that evaluate $\rho$'s value $\Ord(n)$ times~\citep{Fujishige2005-zo}.
} where $L = \max\Set{\norm{x - y}_\infty}{x, y \in \dom f}$.
Therefore, our algorithm runs faster if $\norm{x^* - \hat{x}}_1 = \ord(n)$ and $\SFM = \ord(n^2\EO_f)$ (or $\Tinit = \ord(n^3\EO_f)$).
We below list some situations where $\Tinit = \ord(n^3\EO_f)$ or $\SFM = \ord(n^2\EO_f)$ can occur.

First, consider the case where $\dom f$ is fixed over all instances.
In this situation, we can compute $x^\circ \in \argmin\Set{\norm{x - \round{\hat{x}}}_1}{x \in \dom f}$ from a prediction $\hat{x}$ before a new actual instance of $f$ is revealed, which means that the projection can be included in the phase of computing a prediction $\hat{x}$.
As a result, we can exclude the time for obtaining an initial solution from the time complexity bound of \cref{thm:time-complexity}, i.e., $\Tinit = 0$.

The second scenario is the case where we can represent an objective M-convex function $f$ as
\begin{align}\label{eq:sc-form}
  f(x) = \begin{cases}
    h(x) & (x \in \B(\rho)), \\
    +\infty & (\text{otherwise})
  \end{cases}
\end{align}
using an M-convex function $h\vcentcolon \Z^\N \to \R$ with $\dom h = \Z^\N$ and a submodular function $\rho\vcentcolon 2^\N \to \Z \cup \set{+\infty}$.
Although the function $f$ in the form of~\eqref{eq:sc-form} is not always M-convex (but \emph{$\text{M}_2$-convex}), it is so in some special cases where, e.g., $h$ is separable convex and/or $\rho$ is modular (linear).
Notably, the separable convex case is widely studied in resource allocation~\cite{Hochbaum1995-lh,Moriguchi2011-an,Schoot_Uiterkamp2022-tz}.
In this case, evaluating $f(x)$ for a given $x \in \Z^\N$ involves the membership testing of $x$ for $\B(\rho)$, which costs $\SFM$ time since $x \in \B(\rho)$ is equivalent to $x(\N) = \rho(\N)$ and $\min_{X \subseteq \N} (\rho - x)(X) \ge 0$.
Thus, $\SFM \le \EO_f$ holds, and hence we can assume $\SFM = \ord(n^2 \EO_f)$. 
We, however, remark that algorithms specialized for this case can run faster than the general M-convex function minimization algorithms (see, e.g., \citep[Section~4.5]{Schoot_Uiterkamp2022-tz}), and hence ours is not necessarily the best choice. 
We omit detailed comparisons with them since they involve more case-specific discussions.

The last scenario is the case where $\EO_f$ is sufficiently larger than the time to evaluate $\rho(X)$ for a given $X \subseteq \N$, denoted by $\EO_\rho$.
The fastest submodular function minimization algorithm runs in $\SFM = \Ord\prn[\big]{n^3 \log^2 n \cdot \EO_\rho + n^4 \log^{\Ord(1)}n}$ time~\citep{Lee2015-ln}.
Therefore, we have $\SFM = \ord(n^2\EO_f)$ if $\EO_f$ is asymptotically larger than $n \log^2 n \cdot \EO_\rho + n^2 \log^{\Ord(1)} n$.
More efficient submodular function minimization algorithms are available if $\rho$ enjoys some special structures; for example, $\rho$ is the rank function of certain matroids. 
There also exists an empirically fast algorithm for submodular function minimization~\citep{Chakrabarty2014-uu,Lacoste-Julien2015-ir}, although its time complexity is worse than that of \citep{Lee2015-ln}.

\end{document}